\documentclass[runningheads]{llncs}
\usepackage{graphicx}

\usepackage{tikz}
\usepackage{style-files/comment} 
\usepackage{amsmath,amssymb} 
\usepackage{color}

\usepackage{graphicx}
\usepackage{comment}
\usepackage{amsmath,amssymb} 
\usepackage{color}
\usepackage{epsfig, graphicx, amsmath, amssymb}
\usepackage{multirow,lineno,comment,subfigure,wrapfig,booktabs}
\usepackage[inline,shortlabels]{enumitem}
\usepackage{epsfig}
\usepackage{graphicx}
\usepackage{amsmath}
\usepackage{amssymb}
\usepackage{url}  			
\usepackage{booktabs}       
\usepackage{amsfonts}       
\usepackage{nicefrac}       
\usepackage{microtype}      
\usepackage{mathtools,amsmath,amssymb,verbatim,tabularx,ragged2e,multirow, algorithm, algorithmic}

\usepackage{amsthm,comment,multicol,enumitem,setspace,wrapfig,xcolor}
\usepackage[font=small,skip=0pt]{caption}
\usepackage{amsmath}
\usepackage{algorithm}
\usepackage{mathbbol}

\usepackage{hyperref}

\DeclareMathOperator*{\argmin}{arg\,min}
\DeclareMathOperator*{\argmax}{arg\,max}

\newcounter{ALC@tempcntr}
\newcommand{\LCOMMENT}[1]{
	\setcounter{ALC@tempcntr}{\arabic{ALC@rem}}
	\setcounter{ALC@rem}{1}
	\item[\#] {#1} 
	\setcounter{ALC@rem}{\arabic{ALC@tempcntr}}
}
\newcommand{\LBLANKCOMMENT}{%
	\setcounter{ALC@tempcntr}{\arabic{ALC@rem}}
	\setcounter{ALC@rem}{1}
	\item {} 
	\setcounter{ALC@rem}{\arabic{ALC@tempcntr}}
}

\def\etal{{et al.~}}
\def\transpose{^\intercal}

\def\l2{\ell_2}
\def\ml2{$\l2$}
\def\cs{{\mathbb{c}}}

\def\VS{{\mathcal{V}}}
\def\CS{{\mathcal{C}}}

\def\DS{{\mathcal{D}}}
\def\GS{{\mathcal{G}}}
\def\AS{{\mathcal{A}}}
\def\BS{{\mathcal{B}}}
\def\HS{{\mathcal{H}}}
\def\ES{{\mathcal{E}}}
\def\IS{{\mathcal{I}}}
\def\PS{{\mathbb{P}}}

\def\real{{\mathbb{R}}}

\def\poset{{\Pi(\DS)}}
\def\partition{{\mathcal{P}}}

\def\Xhat{{\hat{X}}}
\def\Xbar{\overline{X}}
\def\Ybar{\overline{Y}}
\def\Sbar{\overline{S}}
\def\ra{{\rightarrow}}

\def\o{ {\prec} }
\def\oe{ {\preceq} }
\def\ob{ {\overline{\o}} }

\def\ot{ {\widetilde{\o}} }

\def\eb{b}
\def\ebo{\eb^\prec}

\newcommand{\dotprod}[2]{{\langle #1,#2 \rangle}}
\newcommand{\norm}[1]{{\left\lVert#1\right\rVert}}
\newcommand{\card}[1]{{\lvert #1 \rvert}}

\newcommand{\be}{\begin{equation}}
\newcommand{\ee}{\end{equation}}
\newcommand{\bea}{\begin{eqnarray}}
\newcommand{\eea}{\end{eqnarray}}

\newcommand{\ue}[1]{\underline{\emph{#1}}}

\newtheorem*{theorem*}{Theorem}
\newtheorem*{lemma*}{Lemma}
\newtheorem*{definition*}{Definition}
\newtheorem*{corollary*}{Corollary}

\begin{document}
\pagestyle{headings}
\mainmatter
\def\ECCVSubNumber{3414}  

\title{Avoiding Exponential Blow-Up in Distributive Lattice Submodular Minimization}


\titlerunning{Distributive Lattice Submodular Minimization}
%
\author{Ishant Shanu}
\institute{}
\authorrunning{I. Shanu}
%
\maketitle

    \begin{abstract}
	Submodular function minimization has gained a lot of interest in recent years. They are highly applicable in the area of Computer Vision and Machine Learning. Often such applications require to work with submodular functions defined on distributive lattice. Current best way of dealing with it is using a transformation which extrapolates the submodular function for the respective boolean lattice. It makes optimization system too inefficient due to enlargement of the working space. Quantitatively, the expanded space has additional exponential (in set size) number of elements. We propose a generic framework for dealing with distributive lattice which only works within distributive lattice. Our framework allows one to use already established submodular function minimization algorithms for boolean lattice. In our experiment, we show the huge improvement in terms of running time over tranditional methods for handling distributive lattice.   
\end{abstract}

    \section{Introduction}

Let $\DS$ be a subset of $2^{\VS}$, $\VS$ a finite non-empty set, that is closed under under the set union and intersection operations. Not only is $\DS$ a distributive lattice,  Birkhoff \cite{birkhoff1937rings} has shown that any finite distributive lattice is  isomorphic to one defined as a subset of some power set. Distributive lattices are being used to model distributed scheduling \cite{yi2007mac,yi2010mac,ammann1993lattice}, formal concept analysis \cite{gely2022}, computer vision \cite{mlgc}, learning through hypothesis generation \cite{kuznetsov2004learning,babin2017dualization}, and in pattern
mining \cite{nourine2012extending}. Considering the spectrum of applications, it is not surprising that research has focused on issues ranging from representations that allow efficient calculation of meet and join operations, ideal enumeration to minimizing functions defined over elements of distributive lattices. 

Our focus in this paper is on algorithms that are practically efficient for distributive lattice function minimization. We limit ourselves to functions that are submodular, i.e.  functions $F: \DS \rightarrow \real$ defined on the distributive lattice $\DS \subseteq 2^{\VS}$ that satisfy,
\begin{align}
F(X) + F(Y ) \geq F(X \cap Y ) + F(X \cup Y ) \quad \forall (X,Y \in \DS).
\end{align}

It has been known for some time that submodular functions defined both over Boolean lattices ($\DS$ is called a Boolean lattice when $\DS = 2^{\VS}$) and distributive lattices can be minimized in strongly polynomial time \cite{schrijver00,orlin09,iwata01,iwata2008submodular}. However, the time complexity of these algorithms is bounded by very high degree polynomials (at least of degree 5) which limits their practical suitability and usefulness. The algorithms are submodular polyhedron based. They are combinatorial in the sense that the optimization problem that is being solved is modeled as a flow problem. Interestingly, the submodular polyhedron based algorithm that is practically the most efficient is based on  minimum-norm-point algorithm \cite{fujishige2006minimum}. While experimentally the observed time complexity is $O(n^{3.5})$ \cite{mccormick2005submodular} its convergence is only guaranteed in principle. The best theoretical estimates of time complexity are at least $O(n^9)$ \cite{chakrabarty2014provable}. Coming up with a reasonable estimate of time complexity for  minimum-norm-point based algorithms for submodular function minimization is of considerable interest.

There is another line of research on practically efficient algorithms for submodular function minimization that has emerged from computer vision where many problems have been modeled requiring computation of maximum a posteriori in a Markov random field (MAP-MRF) \cite{survey2015}. Typically the energy fields are defined over thousands of clusters of pixels (called cliques) in images involving millions of pixels. Individual energy fields defined over cliques are submodular functions and minimization problem can be looked upon as minimizing sum of submodular (SoS) functions. Depending upon the problem clique sizes can range from 2 to 1000. When cliques are of size 2 (order 2)the simplest SoS function problem (involving only 2 labels per pixel) has a natural max flow interpretation \cite{kolmogorov2004energy} and the dynamic tree based max flow algorithm called Graph Cuts \cite{boykov2001fast} is the algorithm of choice. Optimal solutions to higher order SoS submodular functions arising in computer vision was possible only in principle (none of the strongly polynomial algorithms could scale) till a gadget was devised to model flow in a high order clique and a max flow min cut theorem proved \cite{gc}. Since residual capacity calculation was exponential in the size of a clique the algorithm did not scale beyond cliques of size the  for the resultant flow graph.

to 
of order  is strongly polynomial. n polynomial time. Success of the submodular function minimization algorithms come by solving an equivalent problem defined on the base polyhedron.

Current best way of minimizing a submodular function on distributive lattice is to define the equivalent submodular function on the boolean lattice \cite{schrijver00}. Then use algorithms developed to minimise the derived submodular functions on the boolean lattice. This method increases the size of the domain of submodular function and makes optimization too inefficient. Such technique of transforming the domain space can add as large as an exponential (in $\VS$) number of elements in the domain space. Which makes the algorithm intractable for any practical usage.

Some of the notable submodular function minimization techniques over boolean lattice are by Schrijver \cite{schrijver2003combinatorial}, Iwata \cite{iwata01}, Orlin \cite{orlin09}, MNP \cite{fujishige06}. The mentioned methods optimize an equivalent dual problem. Some of the earlier work directly work on the primal domain. A submodular function can be converted into a piecewise continuous convex function using lovasz extension \cite{bach2013learning}. Subgradient methods \cite{boyd2003subgradient} have been used to minimize the lovasz extension of submodular function. Recent work along the line are by Chakrabarty et.al. \cite{chakrabarty2017subquadratic},  Axelrod et.al. \cite{axelrod2020near}.

We introduce a general framework for minimising the submodular functions defined on distributive lattice. Our framework allows algorithms to run only on the elements of the distributive lattice. Which makes huge improvement in running time as compared with the current state of the art method. We will show how to adapt our framework on the algorithms developed for minimising submodular function over boolean lattice.

At this point, we should bring a closely related work by Shanu et. al. \cite{shanu2020inference} which happens to be a special case of our generic framework. They consider the problem defined on the pixels set of images. Where each pixel can have one of $k$ labels from a predefined label set. In theory the pixel set can be seen as set on which a k-submodular functions is defined. Note that in their work every pixel can have label from a fixed label set. On the other hand, using our framework one can work with problem in which every pixel can have different label set.

Organization of the paper is as follows. In the following section \ref{sec:sfmboolean}, we will briefly look at the technique for submodular function minimization over boolean lattice. In section \ref{sec:DL} we give a detailed analysis of already established distributive lattice system. In section \ref{sec:tranformation}, we introduce the new transformation of submodular function from distributive lattice to boolean lattice. In section \ref{sec:invalidbase}, we show the structure that arises in the base polyhedron of the transformed function. We further show that the base span of base vectors can be divided into two spaces. One which is derived from the elements in the distributive lattice. Second space which is derived from the extra elements introduced in the domain through the transformation. In our main result (Theorem \ref{thm:inv-block-rep}), we show that space derived upon extra states can be represented as the span of linear number of elementary basis vectors. In section \ref{sec:invalidl2norm} we give flow based algorithm to optimize the L2 norm of the vector in span of elementary basis vectors. In section \ref{sec:mainalgo}, we adapt the practically fastest algorithm MNP \cite{fujishige2006minimum} for minimizing submodular function. In follow up sections we give proof of the algorithm's convergence. In the experiment section, we show the comparison of running time of algorithms with and without using proposed framework.

	\section{Background}
\label{sec:background}
In the following section section we briefly describe the basic terminology and results from \emph{submodular function minimization} (SFM) literature required to follow the discussion in this paper. We direct the reader to \cite{schrijver2003combinatorial} for more details. 

\subsection{SFM over boolean lattice}
\label{sec:sfmboolean}
The objective of a SFM problem is to find a minimizer set, $S^* = \min_{S \subseteq \VS} f(S)$ of a submodular function $f$, where $\VS$ is the set of all the elements. W.l.o.g. we assume $f(\phi)=0$. We associate two polyhedra in $\real^{|\VS|}$ with $f$, the \emph{submodular polyhedron}, $P(f)$, and the \emph{base polyhedron}, $B(f)$, such that
\begin{align*}
P(f) = & \{x \mid x \in \real^{|\VS|}, ~ \forall ~ U \subseteq \VS: x(U) \leq f(U) \}, \; \text{and} \\
B(f) = & \{x \mid x \in P(f), x(\VS) = f(\VS) \}, 
\end{align*}
where $x(v)$ denotes the element at index $v$ in the vector $x$, and  $x(U) = \sum_{v \in U} x(v)$. A vector in the base polyhedron $B(f)$ is called a \emph{base}, and an extreme point of $B(f)$ is called an \emph{extreme base}. \emph{Edmond's greedy algorithm} gives a procedure to create an extreme base, $b^\o$, given a total order $\o$ of elements of $\VS$ such that $\o: v_1 \o \ldots \o v_n$, where $n=\card{\VS}$. Denoting the first $k$ elements in the ordered set $\{v_1, \ldots,v_k, \ldots, v_n\}$ by $k_\o$, the algorithm initializes the first element as $b^\o(1) = f(\{v_1\})$ and rest of the elements as $b^\o(k) = f(k_\o) - f((k-1)_\o)$. There is a one to one mapping between an ordering of the elements, and an extreme base. The \textit{Min Max Theorem}, states that $ \max \{ x^-(\VS) \mid x \in B(f) \} = \min \{ f(U) \mid U \subseteq \VS \}$. Here, $x^-(\VS)$ gives the sum of negative elements of $x$. 

The min-norm equivalence result shows that $\argmax_{x \in B(f)} x^-(\VS) = \argmin_{x \in B(f)} \\ \norm{x}_2$. Fujishige and Isotani's \cite{fujishige11}  \emph{Min Norm Point} (MNP) algorithm uses the equivalence and solves the problem using Wolfe's algorithm \cite{fujishige2006minimum}. The algorithm has been shown empirically to be the fastest among all base polyhedron based algorithms \cite{jegelka13,sosmnp}.  The algorithm maintains a set of extreme bases, $\{b^{\o_i} \}$, and a minimum norm base vector, $x$, in their convex hull, s.t.:
\begin{equation}
\label{eqn:convexcombination}
x=\sum_i \lambda_i b^{\o_i} \quad  \lambda_i \geq 0, \text{ and }  \sum_i \lambda_i=1. 
\end{equation}
At a high level, an iteration in the MNP/Wolfe's algorithm comprises of two stages. In the first stage, given the current base vector, $x$, an extreme base, $q$, that minimizes $x^{\intercal}q$ is added to the current set. The algorithm terminates in case $\norm{x}=x\transpose q$. Otherwise it finds a new $x$, with smaller norm, in the convex hull of the updated set of extreme bases. 

Minimizing a sum of submodular functions \cite{kolmogorov12,sosmnp} has also gain attention and have practical applications like finding MRF-MAP inference. Shanu \etal  \cite{sosmnp}  have suggested a block coordinate descent framework  to  implement  the  Min  Norm  Point  algorithm in the sum of submodular functions environment for solving MRF-MAP inference problem. A very broad overview of that scheme is as follows.

The submodular function $f$ defined as sum $f(S)= \sum_{\cs \in \CS} f(\cs \cap S) \forall S \subseteq \VS$ of $f_{\cs}$'s. With each submodular function $f_\cs$, one can associate a base polyhedron such that:
\begin{align}
B(f_\cs) := \Big\{y_\cs \in \real^{|\cs|} \mid ~ y_\cs(U) \leq f_\cs(U), ~ \forall U \subseteq \cs ~ ; ~ 
y_\cs(\cs) = f_\cs(\cs) \Big\}. \label{eq:base-cond-2}
\end{align}
The following results \cite{sosmnp} relate a base vector $x$ of function $f$, and a set of base vectors $y_\cs$ of a $f_\cs$:
\begin{lemma}
\label{lem:sum_vec}
Let $x(S) = \sum_{\cs} y_\cs(\cs \cap S)$ where each $y_\cs$ belongs to base polyhedra $B(f_\cs)$. Then the vector $x$ belongs to base polyhedron $B(f)$.
\end{lemma}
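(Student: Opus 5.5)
The plan is to verify the two defining conditions of $B(f)$ directly, using the clique-wise conditions in \eqref{eq:base-cond-2}. Throughout I read the statement as defining $x$ by additivity over the cliques, and write $f_\cs(\cs\cap S)$ for the clique term. Concretely, for each $v \in \VS$ we set $x(v) = \sum_{\cs \ni v} y_\cs(v)$. Then for every $S \subseteq \VS$,
\begin{align*}
x(S) = \sum_{v \in S} \sum_{\cs \ni v} y_\cs(v) = \sum_{\cs \in \CS} \sum_{v \in \cs \cap S} y_\cs(v) = \sum_{\cs \in \CS} y_\cs(\cs \cap S),
\end{align*}
which is only an exchange of the order of summation. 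So $x$ is a genuine vector in $\real^{|\VS|}$ whose set function $S \mapsto x(S)$ is modular, and it agrees with the expression in the statement.

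Next I check membership in $P(f)$. Fix an arbitrary $U \subseteq \VS$. For each clique $\cs$, the set $\cs \cap U$ is a subset of $\cs$, so \eqref{eq:base-cond-2} gives $y_\cs(\cs \cap U) \le f_\cs(\cs \cap U)$. Summing these inequalities over $\cs \in \CS$ yields
\begin{align*}
x(U) = \sum_{\cs} y_\cs(\cs \cap U) \le \sum_{\cs} f_\cs(\cs \cap U) = f(U).
\end{align*}
Since $U$ was arbitrary, $x \in P(f)$.

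Finally I check the equality constraint by taking $U = \VS$. In that case $\cs \cap \VS = \cs$, and the equality $y_\cs(\cs) = f_\cs(\cs)$ from \eqref{eq:base-cond-2} holds for every clique. Summing gives $x(\VS) = \sum_\cs f_\cs(\cs) = f(\VS)$, so $x \in B(f)$.

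There is no real obstacle here. The only point requiring care is the first step: confirming that the prescribed values $x(S)$ come from a single vector. This is why I define $x$ pointwise and recover $x(S)$ by exchanging sums, rather than treating the formula in the statement as an arbitrary set function. A minor bookkeeping point is that the normalization $f(\emptyset)=0$ is consistent with $f_\cs(\emptyset)=0$, but it is not needed for the inequalities above.
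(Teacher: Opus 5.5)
Your argument is correct. You define $x$ pointwise, sum the clique-wise inequalities $y_\cs(\cs\cap U)\le f_\cs(\cs\cap U)$ to get $x(U)\le f(U)$, and sum the equalities $y_\cs(\cs)=f_\cs(\cs)$ to get $x(\VS)=f(\VS)$; this is the standard direct verification. The paper gives no proof of its own here (the lemma is quoted from \cite{sosmnp}), so there is no alternative route to compare against. You also rightly read the paper's $f(S)=\sum_{\cs} f(\cs\cap S)$ as $\sum_{\cs} f_\cs(\cs\cap S)$.
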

\begin{lemma}
\label{lem:sum_vec_converse}
Let $x$ be a vector belonging to the base polyhedron $B(f)$. Then, $x$ can be expressed as the sum: $x(S)=\sum_{\cs} y_\cs(S \cap \cs)$, where each $y_\cs$ belongs to the submodular polyhedron $B(f_\cs)$ i.e., $y_\cs \in B(f_\cs) ~ \forall ~ \cs$.
\end{lemma}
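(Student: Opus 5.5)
The plan is to reduce Lemma \ref{lem:sum_vec_converse} to a decomposition statement about polyhedra. The submodular polyhedron of $f=\sum_\cs f_\cs$ is the Minkowski sum of the polyhedra of the summands. Extreme bases then give the decomposition directly, and convexity carries it to all of $B(f)$. Throughout, each $y_\cs \in \real^{|\cs|}$ is regarded as a vector on $\VS$ that is zero outside $\cs$, so that $x = \sum_\cs y_\cs$ coordinatewise. The identity $x(S)=\sum_\cs y_\cs(S\cap\cs)$ for every $S$ is then the same as coordinatewise equality.

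\emph{Step 1: extreme bases.} Fix a total order $\o$ of $\VS$ and let $b^\o$ be the extreme base of $f$ produced by Edmond's greedy algorithm. Restrict $\o$ to each clique $\cs$, which gives an order $\o_\cs$ of $\cs$. Let $y^{\o}_\cs$ be the greedy extreme base of $f_\cs$ for $\o_\cs$; it lies in $B(f_\cs)$. For $v_k$ the $k$-th element under $\o$,
\begin{align*}
b^\o(v_k) = f(k_\o)-f((k-1)_\o) = \sum_\cs \bigl[f_\cs(\cs\cap k_\o) - f_\cs(\cs\cap (k-1)_\o)\bigr].
\end{align*}
If $v_k \notin \cs$, the bracket vanishes. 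If $v_k\in\cs$, the bracket is exactly the greedy increment of $f_\cs$ at $v_k$ in the order $\o_\cs$. Hence $b^\o = \sum_\cs y^{\o}_\cs$, so every extreme base of $f$ decomposes as required.

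\emph{Step 2: convex combinations.} $B(f)$ is a bounded polytope: every coordinate satisfies $x(v)\le f(\{v\})$ and $x(v) = f(\VS) - x(\VS\setminus\{v\}) \ge f(\VS)-f(\VS\setminus\{v\})$. Its vertices are exactly the greedy bases. So any $x\in B(f)$ can be written $x=\sum_i\lambda_i b^{\o_i}$ with $\lambda_i\ge 0$ and $\sum_i\lambda_i=1$, as in (\ref{eqn:convexcombination}). Set $y_\cs=\sum_i\lambda_i y^{\o_i}_\cs$. Each $B(f_\cs)$ is convex, so $y_\cs\in B(f_\cs)$. By linearity, $\sum_\cs y_\cs = \sum_i \lambda_i b^{\o_i} = x$, which gives $x(S)=\sum_\cs y_\cs(S\cap\cs)$.

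\emph{Main obstacle.} The only non-routine input is that the greedy vectors are precisely the extreme points of $B(f)$, so that every base is a convex combination of them. This is the standard Edmonds/Shapley result, which the paper invokes in Section~\ref{sec:sfmboolean} and which is implicit in representation (\ref{eqn:convexcombination}). I would cite it from \cite{schrijver2003combinatorial} rather than reprove it. If a self-contained argument were wanted, I would use LP duality: maximizing $w\transpose x$ over $P(f)$ is attained by the greedy base for the order of decreasing $w$. A point outside the convex hull of the greedy bases could then be separated by some $w$, which contradicts greedy optimality.

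An alternative is to prove $P(f)=\sum_\cs P(f_\cs)$ directly by a flow or Hall-type argument. This is more cumbersome, so I would use the extreme-base route above.
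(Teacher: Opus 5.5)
Your argument is correct. The paper gives no proof of its own here (it imports the lemma from \cite{sosmnp}), but your Step 1 is exactly the restriction-of-ordering identity $b^{\o_f}=\sum_{\cs} q_\cs^{\o_\cs}$ that the paper derives in the proof of Lemma \ref{lemma:ebrelation}, and your Step 2 carries the convex-combination representation (\ref{eqn:convexcombination}) through clique by clique.

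Two small points. First, the lemma, and your Step 1 at the first element of each $\o_\cs$, tacitly need $f_\cs(\emptyset)=0$ for every $\cs$. Otherwise some $f_\cs(\emptyset)<0$ and that $B(f_\cs)$ is empty. Second, your optional separation sketch should optimize over $B(f)$ rather than $P(f)$, since over $P(f)$ a $w$ with a negative entry makes the problem unbounded.
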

The block coordinate descent approach based on the results requires each block to represent a base vector $y_\cs$ as defined above (c.f. \cite{sosmnp}). Note that a base vector $y_\cs$ is of dimension $|\cs|$, whereas a base $x$ is of dimension $|\VS|$.  Since $|\cs| \ll |\VS|$, minimizing the norm of $y_\cs$ over its submodular polyhedron $B(f_\cs)$ is much more efficient than minimizing the norm of $x$ by just applying the MNP algorithm. 
%
%
%
However, the algorithm based on the above fails to generalize on submodular functions defined on lattice.

%

	\subsection{Distributive lattices}
\label{sec:DL}
It is not feasible to list all the elements of distributive lattice $\DS$. Now, we shall see how to represent $\DS$ as a structured system as shown in \cite{fujishige2005submodular}. For each element $e \in \VS$ define:

\begin{align}
    \DS (e) = \bigcap \{ X | e \in X \in \DS\}.
    \label{eq:defineD}
\end{align}

$\DS (e)$ is the unique state with minimum number of elements which contains $e$. We call $\DS(e)$, the minimal covering state of an element $e \in \VS$. Note that for $e' \in \DS(e)$, $e'$ is contained in $\DS(e)$ (or $\DS(e)$ covers $e'$) but $\DS(e)$ may or may not be minimal cover of $e'$. Therefore $D(e')$ will be a subset of $\DS(e)$. We have, 
\begin{align}
    \DS(e') \subseteq \DS(e), \quad e'\in \DS(e).
    \label{eq:coveringsubset}
\end{align}
Also define the directed graph $\GS(\DS) = \{ \VS , \BS(\DS)\}$ with vertex set $\VS$ and arc set $\BS(\DS)$ given as

\begin{align}
   \BS(\DS) = \{ (e, e') | e \in \VS , e' \in \DS(e)\}.
   \label{eq:defineedges}
\end{align}

Note that if two elements have identical minimal covering state then such a pair has bidirectional edges. By using induction we can conclude that a set of elements which have identical minimal covering states, form a strongly connected component in $\GS(\DS)$. And such a set of elements is the subset of the minimal covering state. We show an example in figure \ref{fig:poset} to make the matter more clear. In the following lemma, we show that elements in strongly connected components co-occur together in a state.

\begin{figure}
  \centering
	 \includegraphics[width=\linewidth]{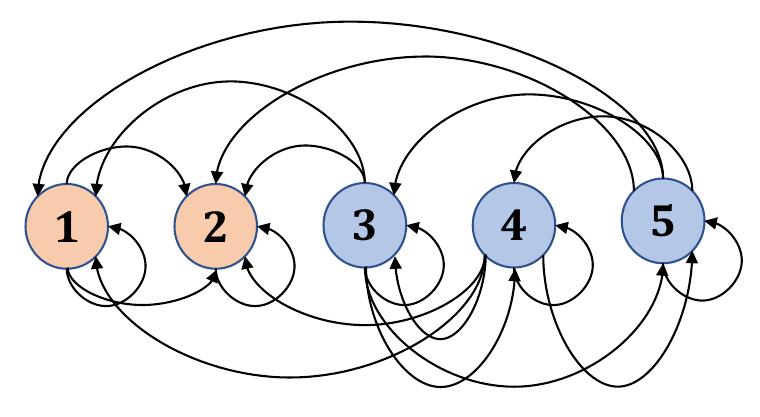}
	  \caption{An example of $\GS(\DS)$ corresponding to lattice $\BS(\DS) = \{ \phi ,  \{1,2\}, \{1,2,3,4,5\} \}$ for $\VS = \{1,2,3,4,5 \}$. Note that $\DS(1) = \DS(2) = \{1,2\}$ and $ \DS(3) = \DS(4) = \DS(5) = \{1,2,3,4,5\}$. We can see that elements with identical minimal covering state forms strongly connected components thus we get two strongly connected componentss $\{1,2\}$ and $\{3,4,5\}$. Also note that there is a natural ordering induced for vertex in one component to another component.}
	  \label{fig:poset}
\end{figure}

\begin{lemma}
Consider a set $\ES$ of elements forming a strongly connected components in $\GS$. And an arbitrary state $X \in \DS$. For an element $e \in X$,  if $e \in \ES$ then every element of set $\ES$ is in $X$ , i.e. $\exists$ $a\in X$,  $\forall a \in \ES$.
\label{lemma:posetelements}
\end{lemma}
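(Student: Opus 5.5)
The argument rests on one observation: a directed edge $(e,e')\in\BS(\DS)$ forces every state containing $e$ to also contain $e'$. Strong connectivity then lets us chain these implications along directed paths inside $\ES$. The plan has three steps.

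\textbf{Step 1 (edges transfer membership).} Take an arc $(e,e')\in\BS(\DS)$, so $e'\in\DS(e)$ by \eqref{eq:defineedges}, and any state $X\in\DS$ with $e\in X$. Then $X$ is one of the sets in the intersection \eqref{eq:defineD} defining $\DS(e)$, hence $\DS(e)\subseteq X$. Consequently $e'\in X$. Two remarks make this rigorous.
\begin{itemize}
\item $\DS(e)$ is well defined: if no state contains $e$, the claim is vacuous for that $e$.
\item $\DS(e)$ is itself a state, because $\DS$ is closed under intersection and is finite. We do not actually need this fact, since only the inclusion $\DS(e)\subseteq X$ is used.
\end{itemize}

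\textbf{Step 2 (propagate along paths).} Let $e\in X\cap\ES$ and let $a\in\ES$ be arbitrary. Strong connectivity of $\ES$ gives a directed path $e=u_0,u_1,\dots,u_k=a$ in $\GS(\DS)$. We prove $u_i\in X$ by induction on $i$.
\begin{itemize}
\item Base case: $u_0=e\in X$ by assumption.
\item Inductive step: each arc $(u_i,u_{i+1})$ is in $\BS(\DS)$, so Step 1 turns $u_i\in X$ into $u_{i+1}\in X$.
\end{itemize}
Therefore $a\in X$. Since $a$ was arbitrary, $\ES\subseteq X$, which is the claim; the quantifiers in the statement should be read this way.

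The only subtle point is in Step 2. The path supplied by strong connectivity might leave $\ES$ and pass through vertices outside it. This does no harm, because Step 1 applies to every arc of $\GS(\DS)$ regardless of which component its endpoints lie in. So no real obstacle is expected.

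A more direct alternative is available if $\ES$ is defined, as in the preceding discussion, as a set of elements sharing the same minimal covering state $C$. For $e\in\ES$, each $a\in\ES$ satisfies $a\in\DS(a)=C=\DS(e)\subseteq X$. This route also works, but it needs the remark $a\in\DS(a)$, which holds whenever some state contains $a$.
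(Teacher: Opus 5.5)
Your proof is correct, and its core step is the same as the paper's: if $e\in X\in\DS$, then $X$ is one of the sets intersected in \eqref{eq:defineD}, so $\DS(e)\subseteq X$.

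The two proofs differ in how they pass from $e$ to all of $\ES$. The paper applies the observation once, using the containment $\ES\subseteq\DS(e)$, which it takes from the discussion before the lemma: members of a strongly connected component share a minimal covering state that contains them. You instead apply the observation arc by arc along a directed path from $e$ to $a$. This shows that everything reachable from $e$ lies in every state containing $e$, so it derives the needed containment directly from the definition of $\BS(\DS)$ rather than citing it.

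The paper's version is a one-liner, but it rests on a fact it only sketches. That fact follows from \eqref{eq:coveringsubset} by an induction along paths, which is essentially your argument. Yours is self-contained and never needs to know that the minimal covering states within a component coincide. Your ``alternative'' route is precisely the paper's proof.

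A minor point: when $\ES$ is a strongly connected component, the path from $e$ to $a$ cannot actually leave $\ES$, since every vertex on it can return to $e$ via $a$. As you say, this is harmless either way.
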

\begin{proof}
By eqn \ref{eq:defineD}, for an element $e \in \VS$, we have, $\DS(e)$ is the intersection of all states in $\DS$ containing $e$. Therefore all the states which contain $e$ also contains every element in $\DS(e)$ and we know $\ES \subset \DS(e)$. Therefore, every element of $\ES$ will be in $X$.    
\end{proof}

Now we prove if there is an edge from a vertex $e$ in a strongly connected component to a vertex $e'$ in another strongly connected component then from each vertex in former strongly connected component there is an edge to $e'$.  

\begin{lemma}
For any two strongly connected components $S_1$ and $S_2$ in $\GS$. If a vertex $e \in S_1$ has an edge $(e,e')$ where $e' \in S_2$. Then there exist edge $(a,e')$, $\forall a \in S_1$.   
\label{lemma:ccedges}
\end{lemma}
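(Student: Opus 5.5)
The plan is to use only the definition of the arc set in (\ref{eq:defineedges}) and the monotonicity property (\ref{eq:coveringsubset}). Unwinding the definitions, an arc $(e,e')$ exists exactly when $e' \in \DS(e)$. So, given $(e,e')$ with $e \in S_1$ and $e' \in S_2$, it suffices to show that $e' \in \DS(a)$ for an arbitrary $a \in S_1$.

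The key step is to show that $\DS(e) \subseteq \DS(a)$ for every $a \in S_1$. Since $a$ and $e$ lie in the same strongly connected component, there is a directed path $a = u_0, u_1, \ldots, u_k = e$ in $\GS(\DS)$. Each arc $(u_{i},u_{i+1})$ means $u_{i+1} \in \DS(u_i)$. Applying (\ref{eq:coveringsubset}) to that arc gives $\DS(u_{i+1}) \subseteq \DS(u_i)$, and chaining these inclusions along the path yields $\DS(e) \subseteq \DS(a)$. (By the symmetric argument all members of $S_1$ have the same minimal covering state, consistent with the remark preceding Lemma~\ref{lemma:posetelements}, but only the one inclusion is needed here.)

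To conclude: from $e' \in \DS(e) \subseteq \DS(a)$ we get $e' \in \DS(a)$, hence $(a,e') \in \BS(\DS)$ for all $a \in S_1$.

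The only delicate point is justifying (\ref{eq:coveringsubset}) if one does not want to take it as given. The direct argument: for $e' \in \DS(e)$, the set $\DS(e)$ is a state containing $e'$. This holds because $\DS$ is closed under intersection and is finite, so $\DS(e) \in \DS$. Since $\DS(e')$ is the intersection of all states containing $e'$, it follows that $\DS(e') \subseteq \DS(e)$. Beyond this, no real obstacle is expected, and a single path-induction suffices.
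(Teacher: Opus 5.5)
Your proof is correct and follows essentially the same route as the paper: an arc $(e,e')$ means $e' \in \DS(e)$, and this membership transfers to every $a \in S_1$ because elements of one strongly connected component share their minimal covering state. The paper simply asserts that shared-covering-state fact. Your path-chaining argument via (\ref{eq:coveringsubset}) actually proves the inclusion $\DS(e) \subseteq \DS(a)$ that is needed, so your version fills a step the paper leaves implicit.
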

\begin{proof}
  By eqn \ref{eq:defineD}, an edge $(e,e')$ implies $e' \in \DS(e)$. We know that all elements in a strongly connected component have identical minimal covering state. Which implies that $e' \in \DS(a)$, $\forall a \in S_1$ or there exist edge $(a, e')$ $\forall a \in S_1$. 
\end{proof}

Now we shall see that any two strongly connected components can not have bidirectional edges.

\begin{lemma}
For any two strongly connected components $S_1$ and $S_2$ in $\GS$. if there exist an edge from a vertex in $S_1$ to a vertex in $S_2$ then there is no edge from $S_2$ to $S_1$.
\label{lemma:unidirectionscc}
\end{lemma}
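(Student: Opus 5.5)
We argue by contradiction, using only the definition of $\DS(e)$ in eqn \ref{eq:defineD}, the inclusion property \eqref{eq:coveringsubset}, and the fact recorded above that all vertices of a strongly connected component share the same minimal covering state. Suppose there is an edge $(e,e')$ with $e\in S_1$, $e'\in S_2$, and also an edge $(g,g')$ with $g\in S_2$, $g'\in S_1$. The goal is to show $\DS(e)=\DS(e')$. Elements with identical minimal covering states have edges in both directions between them, since $e\in\DS(e)=\DS(e')$ and $e'\in\DS(e')=\DS(e)$. That would put $e$ and $e'$ in the same strongly connected component, contradicting $S_1\neq S_2$.

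\textbf{Step 1.} From $(e,e')\in\BS(\DS)$ we have $e'\in\DS(e)$, and \eqref{eq:coveringsubset} gives $\DS(e')\subseteq\DS(e)$.

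\textbf{Step 2.} From $(g,g')$ we have $g'\in\DS(g)$, so $\DS(g')\subseteq\DS(g)$. Since $g,e'\in S_2$ and $g',e\in S_1$, the common-covering-state property gives $\DS(g)=\DS(e')$ and $\DS(g')=\DS(e)$. Hence $\DS(e)\subseteq\DS(e')$.

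\textbf{Step 3.} Steps 1 and 2 together give $\DS(e)=\DS(e')$, which yields the contradiction described above.

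The main obstacle is the fact that vertices in a strongly connected component have identical minimal covering states. The paper only asserts the converse direction, via induction. So I would justify it directly. An edge $(a,b)$ means $b\in\DS(a)$, which gives $\DS(b)\subseteq\DS(a)$ by \eqref{eq:coveringsubset}. Along any directed path the covering states are therefore weakly decreasing. If $a$ and $b$ lie on a common cycle, we get $\DS(a)\subseteq\DS(b)\subseteq\DS(a)$, so they are equal.

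With this monotonicity in hand, Lemma \ref{lemma:ccedges} is not even needed. The path $e\to e'\to\cdots\to g\to g'\to\cdots\to e$ exists, using the strong connectivity within $S_1$ and within $S_2$. This closed walk already merges $S_1$ and $S_2$ into one component, which is an immediate contradiction. I would present this short version as the proof.
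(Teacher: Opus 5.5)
Your proof is correct, but it takes a different route from the paper's.

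\textbf{The paper's route.} The paper argues through Lemma~\ref{lemma:ccedges}. An edge from $S_1$ into $e'\in S_2$ forces an edge from \emph{every} vertex of $S_1$ into $e'$. Symmetrically, an edge from $S_2$ into $e\in S_1$ forces an edge from every vertex of $S_2$ into $e$. So $e$ and $e'$ are joined by arcs in both directions and must lie in one component. This relies on the lattice-specific claim that all vertices of a strongly connected component share the same minimal covering state. Lemma~\ref{lemma:ccedges} uses that claim, but the paper asserts it without proof. The paper's last step also still appeals to the fact that distinct components cannot be merged.

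\textbf{Your short version.} You use only maximality of components. With internal paths $e'\to\cdots\to g$ in $S_2$ and $g'\to\cdots\to e$ in $S_1$, every vertex of $S_1$ reaches every vertex of $S_2$ and conversely. Hence $S_1\cup S_2$ is strongly connected, contradicting $S_1\neq S_2$. This is the standard fact that the condensation of any digraph is acyclic. It needs no property of $\DS$ and avoids the unproved step entirely.

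\textbf{Your Steps 1--3 and the monotonicity argument.} Steps~1--3 are closer to the paper, since they also end with arcs in both directions between $e$ and $e'$. They replace Lemma~\ref{lemma:ccedges} by the monotonicity $b\in\DS(a)\Rightarrow\DS(b)\subseteq\DS(a)$. Your observation that covering states are equal around a cycle is exactly the justification Lemma~\ref{lemma:ccedges} is missing. It also makes your argument robust: if ``component'' is read as ``class of equal minimal covering states'' rather than a maximal strongly connected set, the closed walk still yields $\DS(e)=\DS(e')$. The paper's route additionally exhibits a $2$-cycle between two specific vertices, but the lemma does not need that.
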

\begin{proof}
We prove this by contradiction. Let us say there exist an edge form a vertex in $S_1$ to a vertex $e'$ in $S_2$ and from a vertex in $S_2$ to a vertex $e$ in $S_1$. By lemma \ref{lemma:ccedges}, we have that if a vertex $e'$ in $S_2$ has an edge incident on it from a vertex in $S_1$ then each vertex in $S_1$ must have an edge incident on $e'$. Similarly, by the same argument, there would be edges from each element in $S_2$ incident on vertex $e$ in $S_1$. Now we have that elements $e,e'$ have bidirectional edges between them. Which implies that element in different strongly connected components forms a strongly connected component which is a contradiction. 
\end{proof}

Decompose the graph $\GS(\DS)$ into strongly connected components $\HS_i = (\ES_i , \BS_i)$, $(i \in \IS)$. Here $\ES_i$ denotes the vertex set and $\BS_i$ denotes the set of edges. By lemma \ref{lemma:unidirectionscc}, we have that strongly connected components have only uni-directional or no edges between them. It induces a partial order $\oe_\DS$ between strongly connected components. For example, $\HS_i \oe_\DS \HS_j$ for $i,j \in \IS$ if and only if there exists a directed path from a vertex of $\HS_j$ to a vertex $\HS_i$. We call an ordering $\o$ obeys partial order $\oe_\DS$ if for every pair $\HS_i, \HS_j$,  $\HS_i \oe_\DS \HS_j$ we have $\ES_i \o \ES_j$. Define the set of the vertex sets $\ES_i (i\in I)$ of the strongly connected components by, 

\begin{align}
    \poset = \{ \ES_i | i \in \IS \},
\end{align}

$\poset$ is a partition of $\VS$. Now, we obtain a partially ordered set also known as poset $ \PS(\DS) = ( \poset , \oe_\DS)$.

\begin{definition}[Simple Distributive Lattice]
\label{def:simplelattice}
Distributive lattice $\DS$ is called simple if the partition $\DS$ is composed of singletons of $\VS$ alone, i,e., $ \poset = \{ \{e\} | e \in \VS\}$.
\end{definition}

A submodular system, denoted as tuple $(\DS, f)$, defines submodular function $f$ over distributive lattice $\DS$. For any two sets $A,B \in \DS$, $f$ satisfies the following,
\begin{align}
    f(A) + f(B) \geq f(A \cap B) + f(A \cup B).
\end{align}

Submodular system $(\DS, f)$ is called simple if $\DS$ is simple. For a non-simple submodular system there is at least one element $\ES \in \poset$ that would have at least two elements in it, i.e., $|\ES| \geq 2$. By lemma \ref{lemma:posetelements}, for each state $X \in \DS$, either $X$ would contain all elements in $\ES$ or no element of $\ES$ would be in $X$. Which implies that we can tie all the elements of $\ES$ together and define a new submodular system which is defined on set $\poset$ instead of $\VS$. More formally for non-simple submodular system $(\DS, f)$ on $\VS$, define, 

\begin{align}
    \Xhat  = \{Z | Z \in \poset , Z \subseteq X \} (X \in  \DS), \\
    \hat{\DS} = \{ \Xhat | X \in \DS\}, \\
    \hat{f}(\Xhat) = f(X)  ( X \in \DS).
\end{align}

Then we have a simple submodular system $(\hat{\DS}, \hat{f})$ on $\poset$. Therefore without loss of generality we consider only the simple submodular system in the following sections. We would like to point out that base polyhedron is unbounded when $\DS$ is not complimented even for a simple submodular system \cite{fujishige2005submodular}. Now, we shall see how we can extend the submodular function defined over a distributive lattice to boolean lattice. Later show how the unbounded extended space can be handled very efficiently.  


	\section{Distributive Lattice to Boolean Lattice transformation}
\label{sec:tranformation}
Let $F: \DS \rightarrow \real$ be a submodular function  defined over the distributive lattice $\DS \subseteq 2^{\VS}$ then,
\begin{align}
F(X) + F(Y ) \geq F(X \cap Y ) + F(X \cup Y ) \quad \forall (X,Y \in \DS).
\end{align}

\noindent We denote $S \subseteq \VS$, as a \underline{\emph{state}}. We call states in distributive lattice $S \in \DS$ as \underline{\emph{valid states}} and states, $S \in 2^\VS - \DS$, not in $\DS$ as \underline{\emph{invalid states}}. Now we extend the notion of validity for ordering/extreme base. Intuitively, if an extreme base (using edmond's greedy algorithm) is computed over only valid states then we call it valid extreme base and its corresponding ordering is called valid ordering. More formally we state it as,   
\begin{definition}[Valid Ordering/Extreme Base]
\label{def:validinvalid}
An ordering $\o$ is called a \underline{valid ordering}, if it obeys partial order $\oe_\DS$.  An extreme base $\ebo$ is called a \underline{valid extreme base}, if it corresponds to a valid ordering.
\end{definition}
\noindent The orderings or extreme-bases which are not valid are called \emph{invalid}.


\begin{definition}[Covering State, Minimal Covering State]
\label{extend}
For an arbitrary state, $S \in \VS$, a valid state, $\hat{S} \in \DS$, is called covering if $S \subseteq \hat{S}$. There may be multiple covering states corresponding to a $S$. The one with the smallest cardinality among them is referred to as the minimal covering state, and is denoted by $\Sbar$. There is a unique minimal covering state corresponding to any $S$. For a valid state $S=\Sbar$.
\end{definition}

We are now ready to define a set function $f$ over boolean lattice $2^{\VS}$ which is submodular and identical to the submodular function $F$ on valid states.
\begin{definition}[The Extended Submodular Function] \label{definef} 
\begin{align*}
f(S) = \begin{cases}
F(S), & \text{if } S \in \DS \\
f(\Sbar) + (|\Sbar| - |S|)L & \text{if } S \in 2^\VS \setminus \DS \\
\end{cases}
\end{align*}
Here $L \gg  M = \left[\max_{S \in \DS} f(S) - \min_{S \in \DS} f(S)\right]$.
\end{definition}

We have already established that we can work with the simple submodular system without loss of generality. We know (by definition \ref{def:simplelattice}) that there are no strongly connected components in graph $\GS(\DS)$ if $\DS$ is simple. 

\begin{definition}[Segments of  $\GS(\DS)$]

Decompose the graph $\GS(\DS)$ into connected components $\GS_i = (\VS_i , \AS_i)$. Where $\VS_i$'s are vertex sets and $\AS_i$'s are edge sets. Note that (by lemma \ref{lemma:unidirectionscc}) no two connected components will have an edge between their vertex. We define the set of vertex sets $\VS_i$'s of the connected component by,

\begin{align}
    \partition = \{\VS_i | i \in \IS\}
\end{align}

Note that $\partition$ (ISHANT's COMMENT: $\partition$ is replacement of set of pixels) is a partition of $\VS$. We denote an element of $\partition$ as \underline{\emph{segment}}. 

\end{definition}

Prior to our work the notion of partial order and decomposition of graph $\GS(\DS)$ has been discovered by Birkhoff in his seminal paper \cite{birkhoff1937rings}. In our work, we shall see how one can leverage such decomposition in the corresponding base polyhedron of submodular function defined on the lattice. 

Note that for a valid ordering, elements corresponding to each segment $\VS_i \in \partition$ has an order defined by partial order $\oe_\DS$. Let $d_i = |\VS_i|$ be the cardinality of segment $\VS_i$ and $n$ (ISHANT's COMMENT: $n$ is replacement of number of pixels) be the number of connected components in $\GS(\DS)$, i.e., $n = |\partition|$. Let $\VS_i$ be represented by $\VS_i = \{p_i^1,p_i^2  ~ \cdots ~ p_i^{d_i}\}$. Using an arbitrary ordering of segments $\VS_i$'s, we define a total order over all vertex set $\VS$, called \underline{\emph{universal ordering}}. 

\begin{definition}[Universal Ordering]
	\label{def:univ-ordering}
	Assuming an arbitrary ordering among the sets $\{ \VS_i | i \in \IS\}$, the \underline{universal ordering}, defines a total ordering of the elements $\{p |  p \in \VS_i\} (i \in \IS)$, 
	\[
		\o_0: p_1^1 ~ \o \cdots \o ~ p_1^{d_1} \o \cdots \o ~ p_n^1 \cdots ~ \o ~ p_n^{d_n}.
	\]
\end{definition}

It is easy to see that $f(S)$ can also be defined as follows:
\begin{definition}[The Extended Binary Set Function: Alternate Definition] \label{definef_alt}
\begin{align} 
	f(S) =  f(\Sbar) + \sum_{p \in \partition}(|\Sbar_p| - |S_p|)L,
\end{align}
where $\Sbar_p = p \cap \Sbar$, and $S_p = p \cap S$.
\end{definition}
\begin{theorem} \label{thm:f-submodularity}
	The extended binary set function $f$, as given by Definition \ref{definef}, is submodular, and $\min f(\cdot) = \min F(\cdot)$. 
\end{theorem}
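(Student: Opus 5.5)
The plan is to separate out the modular part of $f$ and reduce submodularity to a statement about the closure map $S \mapsto \Sbar$. Observe that the two cases of Definition \ref{definef} can be written uniformly: for every $S \subseteq \VS$,
\begin{align*}
f(S) = g(S) - L|S|, \qquad g(S) := F(\Sbar) + L|\Sbar|,
\end{align*}
since for valid $S$ we have $\Sbar = S$. The term $-L|S|$ is modular, so it suffices to show that $g$ is submodular on $2^{\VS}$. Here I assume $\VS \in \DS$, as is implicit in the existence of a minimal covering state, so that $\Sbar$ is well defined for every $S$. By closure of $\DS$ under intersection, $\Sbar$ is then the intersection of all valid supersets of $S$.

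First I would record two lattice facts about the closure. Fix arbitrary $A,B \subseteq \VS$ and put $X=\bar A$, $Y=\bar B$, $Z=\overline{A\cap B}$.
\begin{enumerate}
\item[(i)] $\overline{A\cup B} = X\cup Y$. The set $X\cup Y$ is valid and contains $A\cup B$. Conversely, any valid superset of $A\cup B$ contains both $X$ and $Y$.
\item[(ii)] $Z \subseteq X\cap Y$, because $X\cap Y$ is a valid superset of $A\cap B$.
\end{enumerate}

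Next I would chain inequalities. Submodularity of $F$ on $\DS$, applied to $X,Y$, together with $|X|+|Y|=|X\cap Y|+|X\cup Y|$, gives
\begin{align*}
g(A)+g(B) \geq F(X\cap Y)+L|X\cap Y| + F(X\cup Y)+L|X\cup Y|.
\end{align*}
By (i), the last two terms equal $g(A\cup B)$. It then remains to show
\begin{align*}
F(X\cap Y)+L|X\cap Y| \geq F(Z)+L|Z| = g(A\cap B).
\end{align*}
This comparison is the only delicate step, since it is exactly where the choice of $L$ enters.

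I would handle it in two cases.
\begin{itemize}
\item If $Z = X\cap Y$, the two sides are equal.
\item Otherwise, (ii) gives $|X\cap Y|-|Z|\geq 1$. Both $Z$ and $X\cap Y$ are valid, so $F(Z)-F(X\cap Y)\leq M \leq L$, and the inequality follows.
\end{itemize}
Hence $g$ is submodular, and therefore so is $f$.

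For the minimum, note first that $f$ agrees with $F$ on $\DS$, so $\min f \leq \min F$. For an invalid $S$, we have $S \subsetneq \Sbar$, so $|\Sbar|-|S|\geq 1$. Since $L>0$, this gives $f(S) \geq F(\Sbar)+L > F(\Sbar) \geq \min F$. Thus no invalid state attains a value below $\min F$, and $\min f = \min F$, with every minimizer of $f$ lying in $\DS$.
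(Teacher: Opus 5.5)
Your proof is correct and follows essentially the same route as the paper. Both write $f$ as the modular term $-L|S|$ plus $F(\overline{S})+L|\overline{S}|$, use the closure facts $\overline{A\cup B}=\bar A\cup\bar B$ and $\overline{A\cap B}\subseteq\bar A\cap\bar B$ (which the paper cites from Schrijver rather than proving), and run the same chain of inequalities. Your case split in the final comparison is a cleaner version of the paper's monotonicity lemma, whose statement in terms of $f$ rather than $F(\cdot)+L|\cdot|$ is slightly off, and your argument for $\min f=\min F$ covers a part of the theorem that the paper's proof leaves implicit.
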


Recall that we define the extended binary submodular function for the valid states as equal to the original  function $F$ and for the invalid states as the following:
\[ f(S) =  f(\Sbar) + (|\Sbar| - |S|)L. \] 
Here $\Sbar$ is the minimum covering state for an invalid state, $S$, which is defined as the smallest cardinality valid state, $\Sbar \in \DS$, such that $S \subset \Sbar$. For a valid state $S=\Sbar$.

Let us factorize $f(S)=g(S)+h(S)$, where $h(S) = f(\Sbar) +  |\Sbar| L$, and $g(S) = -|S| L$. Since, $g$ is modular, it is sufficient to show that $h$ is submodular. We will need the following result to prove the Theorem.

\begin{lemma}
	\label{lemma:invalid_state_ineq}
	For sets $X,Y$, and $(X \cap Y) \subseteq \VS$ and their minimum covering states $\Xbar, \Ybar$, and $\overline{X \cap Y}$ respectively: 
	\[ f(\overline{X \cap Y}) \leq f(\Xbar \cap \Ybar)) \]
\end{lemma}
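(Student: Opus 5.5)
The plan is to reduce the inequality to a comparison of valid states, and ideally to the set identity
\[
\overline{X \cap Y} \;=\; \Xbar \cap \Ybar ,
\]
since then the two sides of the lemma coincide and the inequality holds with equality.

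\emph{Step 1 (one inclusion).} $\DS$ is closed under intersection, so $\Xbar \cap \Ybar \in \DS$. Also $X \cap Y \subseteq \Xbar \cap \Ybar$. Hence $\Xbar \cap \Ybar$ is a covering state of $X \cap Y$ in the sense of Definition~\ref{extend}. The minimal covering state is the intersection of all covering states; this is the set analogue of \eqref{eq:defineD}, and it is what makes $\overline{\,\cdot\,}$ unique. It follows that $\overline{X \cap Y} \subseteq \Xbar \cap \Ybar$, and both sides are valid states, so $f = F$ on them.

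\emph{Step 2 (the reverse inclusion, or a monotone comparison).} To finish I would show $\Xbar \cap \Ybar \subseteq \overline{X \cap Y}$. The natural attempt uses the arc description \eqref{eq:defineedges}. I would write $\overline{S} = \bigcup_{e \in S} \DS(e)$, which holds because a union of the $\DS(e)$ is a union of valid states and hence valid. This gives
\[
\Xbar \cap \Ybar \;=\; \bigcup_{e\in X,\,e'\in Y} \bigl(\DS(e) \cap \DS(e')\bigr),
\qquad
\overline{X\cap Y} \;=\; \bigcup_{e \in X\cap Y} \DS(e).
\]
The aim is then to show that each $\DS(e)\cap\DS(e')$ lies in $\overline{X\cap Y}$, using \eqref{eq:coveringsubset} and the segment/partial-order structure, Lemmas~\ref{lemma:ccedges}--\ref{lemma:unidirectionscc}. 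If equality cannot be reached, the fallback is to compare $F$ on the nested valid pair $\overline{X\cap Y} \subseteq \Xbar\cap\Ybar$ directly.

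\emph{Main obstacle.} Step~2 is where I expect real trouble. The reverse inclusion fails whenever $\DS(e)\cap\DS(e')$ with $e \neq e'$ contains elements that are not forced by $X\cap Y$. For example, take the chain $\DS=\{\emptyset,\{1\},\{1,2\}\}$ with $X=\{2\}$ and $Y=\{1\}$. Then $\overline{X\cap Y}=\emptyset$ but $\Xbar\cap\Ybar=\{1\}$. In that case the bare inequality $F(\emptyset)\le F(\{1\})$ is not implied by submodularity of $F$.

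So before writing the proof I would check which restriction the paper has in force here. One possibility is that the intended $X,Y$ have $\Xbar\cap\Ybar=\overline{X\cap Y}$, which holds for instance when $X$ and $Y$ are themselves valid, or when their elements lie in distinct segments of $\partition$. In that case Step~1 plus the identity gives the lemma with equality. Otherwise I would prove the lemma in the form that is actually used for the submodularity of $h$ in Theorem~\ref{thm:f-submodularity}. That form compensates the gap $F(\Xbar\cap\Ybar)-F(\overline{X\cap Y})\ge -M$ by the strictly larger count $\bigl(|\Xbar\cap\Ybar|-|\overline{X\cap Y}|\bigr)L \ge L \gg M$.
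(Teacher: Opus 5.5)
Your counterexample is correct: read literally (with $f=F$ on both valid sides) the lemma is false. The paper's own proof in fact only establishes the $L$-compensated inequality $f(\overline{X \cap Y}) + |\overline{X \cap Y}|L \le f(\Xbar \cap \Ybar) + |\Xbar \cap \Ybar|L$, i.e.\ $h(\overline{X \cap Y}) \le h(\Xbar \cap \Ybar)$, and then misreports it as an inequality for $f$. Your fallback is exactly the paper's argument and exactly the form used in the proof of Theorem~\ref{thm:f-submodularity}: the inclusion $\overline{X \cap Y} \subseteq \Xbar \cap \Ybar$, which you derive directly from closure of $\DS$ under intersection where the paper cites Schrijver, combined with $h(A) \le h(B)$ for valid $A \subsetneq B$ because $(|B|-|A|)L \ge L \gg M$.
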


\begin{proof}
	Recall that for any valid state $S$, $\Sbar=S$. Consider, two valid states $A,B \subseteq V$ with $A \subseteq B $. It is easy to see that:
	\be h(B) - h(A) = L(\card{B}-\card{A})+ f(B)-f(A) \geq 0. \label{eq:app_ext_func_1} \ee
	Since, $A \subseteq B $, therefore, $\card{B}-\card{A} \geq 0$. Also $L \gg f(B)-f(A)$ by definition. Therefore, $h(A) \leq h(B)$. Further, it has been shown in section $6$ of \cite{schrijver00} that for two $X,Y \in \VS $, $\overline{X \cup Y} = (\Xbar \cup \Ybar)$ and $\overline{X \cap Y} \subseteq (\Xbar \cap \Ybar)$ holds. Therefore, using Eq. (\ref{eq:app_ext_func_1}), $f(\overline{X \cap Y}) \leq f(\Xbar \cap \Ybar))$.
\end{proof}

We can now give the proof of the theorem as follows. For the valid states, the extended function $f$, has been shown to be submodular in \cite{mlgc}. Therefore, here, we show only for the cases when $S$ is an invalid state. Now, for two arbitrary (valid or invalid) sets, $X,Y \subseteq \VS$ 

\begin{align*}
	h(X) + h(Y) & = f(\Xbar) + f(\Ybar) + |\Xbar| L + |\Ybar|L 
	\\
	& \geq f(\Xbar \cup \Ybar) + f(\Xbar \cap \Ybar) + |\Xbar| L + |\Ybar| L 
	\tag{Using submodularity over $\Xbar$, and $\Ybar$} 
	\\
	& = f(\Xbar \cup \Ybar) + f(\Xbar \cap \Ybar) + |\Xbar \cup \Ybar| L  
	+ |\Xbar \cap \Ybar|L 
	\tag{Since $|\Xbar| + |\Ybar| = |\Xbar \cup \Ybar| + |\Xbar \cap \Ybar|$}
	\\
	& = f(\overline{X \cup Y}) + |\overline{X\cup Y}| L + f(\Xbar \cap \Ybar) + |\Xbar\cap \Ybar| L 
	\tag{Since $\overline{X \cup Y} = (\Xbar \cup \Ybar)$}
	\\
	& \geq f(\overline{X \cup Y}) + |\overline{X\cup Y}| L + f(\overline{X \cap Y}) +  
	|\overline{X\cap Y}|L 
	\tag{Using Lemma \ref{lemma:invalid_state_ineq}}
	\\
	& = h(X \cup Y) + h(X \cap Y).
\end{align*}

The above shows that $h$ is submodular. It is easy to see that, $g$, as defined above is modular. Since addition of a modular function and a submodular function is submodular, therefore, $f = g + h$ is submodular.

Note that in the proposed extension, any value of $L \gg  M$, keeps the function, $f$, submodular. However, as we show later, choosing such a large value of $L$, makes the contribution of some extreme bases very small causing precision issues in the computation. We also show that including those extreme bases with very small contribution is extremely important for achieving the optimal inference. The major contribution of this paper is in showing that one can perform an efficient inference bypassing $L$ altogether. Therefore, the use of $L$ is merely conceptual in our framework. There is no impact of actual value of $L$ on the algorithm's performance.


	\section{Representing Invalid Extreme Bases}
\label{sec:invalidbase}
In the discussion that follows, we refer to any scalar as \ue{small} or \ue{finite} if its absolute value is $\ll L$, and \ue{large} or \ue{infinite} if the absolute value is $\propto L$. We write Eq. (\ref{eqn:convexcombination}) as:
\be 
x = x_v + x_i = \sum_{b^{\o_j} \in R} \lambda_j b^{\o_j} + \sum_{b^{\o_i} \in Q} \lambda_i b^{\o_i}.
\label{eq:base_split_valid_invalid} 
\ee
Here, $R$ and $Q$ are the sets of valid and invalid extreme bases, and $x_v$, and $x_i$, their contribution in $x$ respectively.  It is easy to see that, all the elements of $x_i$ must be much smaller than $L$ \footnote{We start the algorithm with a valid extreme base, where the condition is satisfied. In all further iterations the norm of $x$ decreases monotonically, and the condition continues to remain satisfied.}.  We first focus on the relationship between $\lambda$ and $L$ in the block of invalid extreme bases.
\begin{lemma}
	\label{lemma:elementbound}
	For any element, $e$, of an invalid extreme base, $\ebo: \ebo(e) = a_e L + b_e$, where $|a_e|, |b_e| \ll L$ and $a_e \in I$.
\end{lemma}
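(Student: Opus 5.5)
We unpack Edmond's greedy algorithm for the ordering $\o$ and substitute the extended function of Definition~\ref{definef}. Let $e$ occupy position $k$ in $\o$ and write $S_k = k_\o$, $S_{k-1} = (k-1)_\o$. By the greedy rule, $\ebo(e) = f(S_k) - f(S_{k-1})$. Every state, valid or invalid, can be written uniformly as $f(S) = F(\Sbar) + (|\Sbar| - |S|)L$, because for a valid state $\Sbar = S$ and the $L$-term vanishes. Substituting gives
\[
\ebo(e) = \bigl(F(\Sbar_k) - F(\Sbar_{k-1})\bigr) + \bigl( (|\Sbar_k| - |\Sbar_{k-1}|) - (|S_k| - |S_{k-1}|)\bigr) L .
\]
We therefore take $b_e = F(\Sbar_k) - F(\Sbar_{k-1})$ and $a_e = |\Sbar_k| - |\Sbar_{k-1}| - 1$, using $|S_k| - |S_{k-1}| = 1$.

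The next step is to check the two required properties of these quantities.

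\textbf{Integrality.} Here $a_e$ is a difference of cardinalities minus one, so $a_e \in \mathbb{Z}$, which is the intended meaning of $a_e \in I$. Since $S_{k-1} \subseteq S_k$ and the minimal covering map is monotone ($\overline{X \cup Y} = \Xbar \cup \Ybar$, as cited in the proof of Lemma~\ref{lemma:invalid_state_ineq}), we have $\Sbar_{k-1} \subseteq \Sbar_k$. Combined with $e \in \Sbar_k$, this gives $-1 \le a_e \le |\VS| - 1$.

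\textbf{Smallness.} We need $|a_e| \le |\VS| \ll L$, and $|b_e| \le \max_{S\in\DS} F(S) - \min_{S \in \DS} F(S) = M \ll L$. The second bound holds because $\Sbar_k$ and $\Sbar_{k-1}$ are both valid, so $b_e$ is a difference of two values of $F$ on $\DS$.

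The main point needing care is the claim that the decomposition is well defined, that is, that $a_e L + b_e$ genuinely separates a "large" part from a "small" part. This relies on $L$ being chosen with $L \gg M$ and $L \gg |\VS|$, which follows from the definition of $L$ in Definition~\ref{definef} read as a formal parameter. I would state this explicitly. I would also note that for a valid ordering every prefix is valid, so $a_e = 0$ throughout. This shows that the $L$-dependence is exactly what distinguishes invalid extreme bases, which is the use the lemma is put to next.
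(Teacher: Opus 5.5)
Your proposal is correct and takes the same route as the paper: expand $\ebo(e) = f(S_1) - f(S_2)$ via the greedy rule, substitute Definition~\ref{definef} for both prefixes, and collect the coefficient of $L$. You go a little further than the paper by giving $a_e = |\Sbar_k| - |\Sbar_{k-1}| - 1$ explicitly as an integer with $-1 \le a_e \le |\VS| - 1$, bounding $|b_e| \le M$, and noting that $|a_e| \ll L$ tacitly requires $L \gg |\VS|$, which the paper leaves unstated.
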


\begin{proof}
Let $S_2$ be the set of all elements smaller than $e$ as per $\o$. Let $S_1=S_2 \cup \{e\}$.
\begin{align*}
	\ebo(e) &= f(S_1)-f(S_2)   \tag{Definition of extreme base}  \\
	& = \left( f(\Sbar_1)+ (|\Sbar_1| - |S_1|) L \right) 
	  - \left( f(\Sbar_2)+ (|\Sbar_2| - |S_2|) L \right)   \tag{Definition \ref{definef}}  \\ 
	& = \left( f(\Sbar_1) - f(\Sbar_2) \right) 
	+ \left( |\Sbar_1| - |S_1| - |\Sbar_2| + |S_2| \right) L \\
	& = a_e L + b \tag{where $|a_e|, |b_e| \ll L$}
\end{align*}
\end{proof}

\begin{lemma}
	\label{lamba-for-invalid-eb}
	Consider two base vectors $x_1$ and $x_2$ such that $\norm{x_1}^2, \norm{x_2}^2 < \card{\VS}M^2$. If $x_2 = (1-\lambda) x_1 + \lambda \ebo$ and $\ebo$ is an invalid extreme base, then $\lambda \leq \card{\VS}\frac{M}{L}$.
\end{lemma}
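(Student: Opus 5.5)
Rearranging $x_2 = (1-\lambda)x_1 + \lambda \ebo$ gives
\[ \lambda(\ebo - x_1) = x_2 - x_1, \]
so $\lambda$ is bounded by the ratio of a small quantity to a large one. The right-hand side has small norm: $\norm{x_2 - x_1} \le \norm{x_1} + \norm{x_2} < 2\sqrt{\card{\VS}}M$. The plan is to show that $\ebo - x_1$ has at least one coordinate whose absolute value is of order $L$. Dividing will then give $\lambda \lesssim \sqrt{\card{\VS}}M/L$, and a crude bound suffices for the stated $\card{\VS}M/L$.

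\textbf{Step 1: an invalid extreme base has a coordinate of order $L$.} By Lemma \ref{lemma:elementbound}, each coordinate is $\ebo(e) = a_e L + b_e$ with $a_e$ an integer and $b_e$ small. Since $\ebo \in B(f)$, its coordinates sum to $f(\VS)$, which is finite because $\VS \in \DS$. So $\sum_e a_e = 0$, and it suffices to show that some $a_e \neq 0$. Because $\o$ is invalid, some prefix $k_\o$ is an invalid state. Take the first index $k$ at which the prefix becomes invalid, so $(k-1)_\o$ is valid and $k_\o$ is not. By Definition \ref{definef},
\[ \ebo(v_k) = f(\overline{k_\o}) - f((k-1)_\o) + (|\overline{k_\o}| - |k_\o|)L. \]
Here $|\overline{k_\o}| - |k_\o| \ge 1$, since $\overline{k_\o}$ strictly contains the invalid set $k_\o$. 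Hence $a_{v_k} \ge 1$, and $\ebo(v_k) \ge L - M$.

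\textbf{Step 2: bound $x_1$ coordinatewise.} From $\norm{x_1}^2 < \card{\VS}M^2$ we get $|x_1(e)| < \sqrt{\card{\VS}}M$ for every $e$. Therefore
\[ |\ebo(v_k) - x_1(v_k)| \ge L - M - \sqrt{\card{\VS}}M. \]
On the other side, $|x_2(v_k) - x_1(v_k)| < 2\sqrt{\card{\VS}}M$.

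\textbf{Step 3: combine.} Steps 1 and 2 give
\[ \lambda < \frac{2\sqrt{\card{\VS}}M}{L - (1+\sqrt{\card{\VS}})M}. \]
Using $L \gg M$ (say $L \ge 4\card{\VS}M$), the denominator is at least $L/2$, so $\lambda \le 4\sqrt{\card{\VS}}M/L$. This is at most $\card{\VS}M/L$ whenever $\card{\VS} \ge 16$. For smaller $\card{\VS}$ the constant has to be tightened, either by absorbing it into the ``$\gg$'' convention for $L$, or by noting that the bound is only meant up to small factors.

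The main obstacle is Step 1: making rigorous that the first invalid prefix produces a jump of at least $L$, with $\overline{k_\o}$ strictly larger than $k_\o$. A secondary issue is the constant bookkeeping needed to land exactly on $\card{\VS}M/L$, which I expect the intended argument treats loosely.
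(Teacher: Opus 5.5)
Your proposal is correct and follows the paper's argument in substance. The paper also takes the first element $e$ whose addition makes the prefix invalid, shows $\ebo(e)\ge L-M$ from $\card{\Sbar_1}-\card{S_1}\ge 1$, and uses $|x_1(e)|\le\sqrt{\card{\VS}}\,M$. It differs only in presentation: it argues by contradiction, with a case split on the sign of $x_1(e)$, that $\lambda>\card{\VS}M/L$ would force $x_2(e)^2>\card{\VS}M^2$, rather than solving for $\lambda$ directly as you do.

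On the constant, the paper simply assumes $\card{\VS}\ge 6$, and an assumption of this kind is genuinely needed. As $L/M\to\infty$, both arguments give only $\lambda\lesssim 2\sqrt{\card{\VS}}\,M/L$, which yields the stated bound only when $\card{\VS}>4$. So absorbing small $\card{\VS}$ into the $L\gg M$ convention does not cover $\card{\VS}\le 4$.
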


\begin{proof}
Recall that in our algorithm, base vector is represented as the sum of contributions from valid and invalid extreme bases separately: $x=x_v+x_i$, where $x_v$ and $x_i$ are the base vectors collecting contributions of valid and invalid extreme bases respectively. Further, we start from a valid extreme base and in each iteration of the algorithm, keep on decreasing the norm of the overall base vector. Note that, all the elements of a valid extreme base are smaller than $M$. Therefore the squared $\ell_2$ norm of the overall base vector is less than $|\VS|M^2$ at any point in the algorithm. 

We will prove the lemma by contradiction, and show that unless the $\lambda$ for the invalid extreme base is less than $\card{\VS}M/L$, the squared norm of the overall base vector is more than $|\VS|M^2$, which is a contradiction. 

We will first need to prove the following result:

\begin{lemma} \label{lemma:elementbound_inner}
	Consider an invalid ordering $\o$,  and its corresponding invalid extreme base $\ebo$. Let $e$ be the smallest element (as per $\o$), for which validity condition is violated. Then, $\exists ~ a_e \in \real$, and $a_e \geq (1-M/L)$, s.t. $\ebo(e) = a_e L$. 
\end{lemma}

\begin{proof}
	Let $S_2$ be the set of all elements smaller than $e$ as per $\o$. Let $S_1=S_2 \cup \{e\}$. Notice that $S_2$ is a valid and $S_1$ is an invalid state. 

	\begin{align*}
	\ebo(e) &= f(S_1)-f(S_2)   \tag{Definition of extreme base}  \\
	&= \left( f(\Sbar_1)+ (|\Sbar_1| - |S_1|) L \right) - f(S_2)   \tag{Definition \ref{definef}}  \\ 
	& \geq \min_{S \in Z}f(S)-f(S_2) + (|\Sbar_1| - |S_1|)L  \tag{$\Sbar_1$ is a valid state, therefore $f(\Sbar_1)\geq \min_{S \in Z} f(S)$} \\
	&\geq \min_{S \in Z}f(S)-\max_{S \in Z}f(S) + (|\Sbar_1| - |S_1|)L   \tag{$S_2$ is a valid state, therefore $f(S_2)\leq \max_{S \in Z} f(S)$} \\
	&= (|\Sbar_1| - |S_1| - M/L)L  \tag{Defintion of $M$} \\
	\end{align*}
	
	Note that for any invalid state $S_1$, $(|\Sbar_1| - |S_1|) \geq 1$. Therefore there exists $a_e \geq (1-M/L)$ such that $\ebo(e) = a_e L$.  
\end{proof}

To prove our main result by contradiction, assume $\lambda > \card{\VS}M/L$. Let $e$ be the smallest element (as per $\o$ of invalid extreme base $\ebo$), for which validity condition is violated. Consider:

\begin{align*}
	(x_{2}(e))^2 	&=((1-\lambda)x_1(e)+\lambda \ebo(e))^2 
	\\
	& > ((1-\lambda) x_1(e)+ \ebo(e)  |\VS| M/L)^2 \tag{$\lambda > \card{\VS}M/L$}
	\\
	& = ((1-\lambda) x_1(e)+ a_e  |\VS| M))^2. \tag{Using lemma \ref{lemma:elementbound_inner}} 
    \intertext{
    	Two cases are possible: 
    	\begin{enumerate} \item $x_1(e) \geq 0$: \end{enumerate}
    } 
	(x_{2}(e))^2  &\geq ((1-\lambda) x_1(e)+ a_e |\VS| M))^2 
	\\
	& \geq (a_e |\VS| M)^2   \tag*{(Since $(1-\lambda) \geq 0$)}
	\\ 
	& = (a_e|\VS|) (|\VS|M^2) 	 
	\\ 
	\intertext{
		Since $M \ll L$, and $a_e \geq (1-M/L)$, therefore $a_e \approx 1$. The smallest problem size that we consider is $|\VS|=6$. Hence, for our case, $a_e|\VS| > a_e \sqrt{|\VS|} > 2$. This implies: 
	}
	(x_{2}(e))^2 & >  |\VS| M^2.  
	\\ 
	\intertext{
		\begin{enumerate} \setcounter{enumi}{1} \item $x_1(e) < 0$: \end{enumerate}
		Note that for $x_1$ and any element $e \in \VS$ we have $x_1(e)^2 \leq \norm{x_1}^2 \leq  |\VS|M^2$. This implies that $x_1(e) \geq -\sqrt{|\VS|} M$.
	} 
	(x_{2}(e))^2  &\geq ((1-\lambda) x_1(e)+ a_e  |\VS| M)^2 
	\\
		& \geq (-(1-\lambda)\sqrt{|\VS|} M+ a_e  |\VS| M)^2 
		\\
		& \geq (-\sqrt{|\VS|} M+ a_e |\VS| M)^2  \tag{Since $1 \geq (1-\lambda) \geq 0$}
		\\
		& = M^2|\VS|(a_e\sqrt{|\VS|}-1) \tag{As described in the first case $a_e\sqrt{|\VS|}> 2$}	 
		\\ 
		& > |\VS|M^2. 		  
	\end{align*}
Both the cases imply that if $\lambda > |\VS|M/L$ then norm $||x_2||^2 > |\VS|M^2$ which is a contradiction. Hence for any invalid extreme base its contribution $\lambda$ in the overall base vector must be less than $|\VS|M/L$.
\end{proof}

\noindent Conceptually, Lemma \ref{lemma:elementbound} shows that all elements of an invalid extreme base are either small or are proportional to $L$ (and not proportional to, say $L^2$, or other higher powers of $L$). Whereas, Lemma \ref{lamba-for-invalid-eb} shows that since $\VS$ and $M$ are effectively constants, $\lambda$ the multiplicative factor associated with in the contribution of invalid extreme bases, $\lambda$ is proportional to $1/L$. Therefore, for $L \approx \infty$, the value of $\lambda \approx 0$. However, it is important to note that the value of $\lambda \ebo(e)$, is always finite. It is easy to see that, whenever $a_e=0$, $\lambda \ebo(e) \approx 0$, and when $a_e \neq 0$, the $L$ present in the $\ebo(e)$ and $1/L$ present in $\lambda$ cancel each other, leading to a finite contribution. The argument as given above motivates our overall approach in this paper that, for a numerically stable norm minimization algorithm, focus should be on manipulating the finite valued product $\lambda \ebo$, and not the individual  $\lambda$ and $\ebo(e)$. We show in the following sections that this is indeed possible. 

We start by showing that it is possible to find a small set of what we call \emph{elementary} invalid extreme bases whose linear combination contains as a subset the space of vectors $x_i$ as given in Eq. (\ref{eq:base_split_valid_invalid}). Crucial to doing this is the notion of \emph{canonical orderings}. 

\subsection{Canonical Ordering and Its Properties}

\setlength{\columnsep}{10pt}%
\setlength{\intextsep}{1pt}%
\begin{wrapfigure}{r}{0.6\linewidth}
	\begin{center}
		\includegraphics[width=\linewidth]{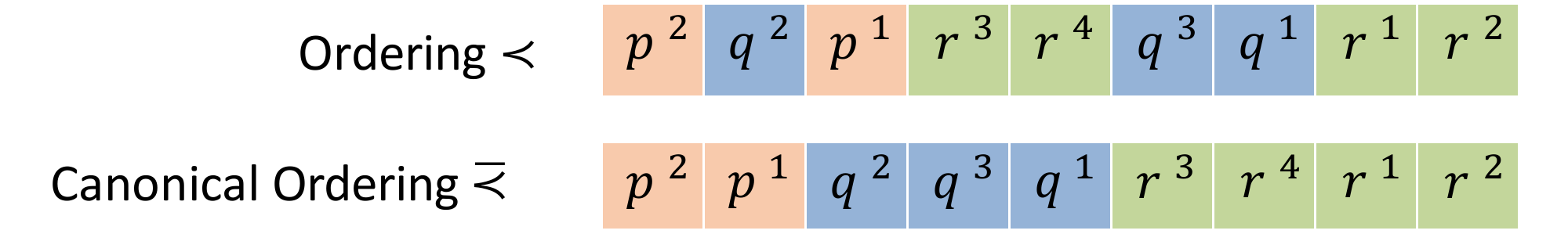}
	\end{center}
	\caption{Top: An ordering defined over segments $\partition=\{p,q,r\}$, where $p = \{p^1, p^2\}$, $q = \{q^1, q^2, q^3\}$, $r = \{r^1, r^2, r^3,r^4\}$. Bottom: Corresponding canonical ordering.}
	\label{Fig:canonical}
\end{wrapfigure}
In an arbitrary, valid or invalid, ordering $\o$ consider two adjacent elements $u$ and $v$ such that $u ~ \o ~ v$. We term swapping of order locally between $u$ and $v$ in $\o$ as an \emph{exchange operation}. The operation will result in a new ordering $\o_\text{new}$ such that $u$ and $v$ are still adjacent but $v ~ \o_\text{new} ~ u$.

Consider a strategy in which starting with $\o$ we carry out exchange operations till all the elements corresponding to each segment $\VS_i,  (i \in \IS)$ come together. Note that we do not change the relative ordering between elements in the same segment $\VS_i$,  $(i \in \IS)$. We call the resultant ordering the \emph{canonical} form of the original ordering $\o$ and denote it by $\ob$. The corresponding extreme base is called \emph{canonical extreme base}. Note that there can be multiple canonical forms of an ordering. Figure  \ref{Fig:canonical} contains an example of an arbitrary ordering and one of its canonical orderings. We emphasize here that there may be more than one canonical orderings corresponding to $\o$. 

Note that a valid / invalid ordering leads to a valid / invalid canonical ordering. Further, a canonical ordering is agnostic to any relative order among segments. For example, for segments $\VS_i \in \partition, (i \in \IS)$, a canonical ordering only requires that all elements of each segment $\VS_i, (i \in \IS)$ are contiguous. In general a canonical ordering $\ob$ corresponding to a $\o$ can be any one of the possible canonical orderings.

\begin{lemma}  \label{lemma:ebequal}
	Let $\o$ be an invalid ordering and $\ob$ be its canonical ordering. Then, $\ebo(e) - \eb^{\ob}(e) \ll L, \forall e \in \VS$.
\end{lemma}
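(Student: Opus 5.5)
Since $\ob$ is reached from $\o$ by a finite sequence of exchange operations, the plan is to prove the statement for a single exchange and then add up the changes. A single exchange only alters the extreme base in the two swapped coordinates. Every ordering has at most $\card{\VS}^2$ inversions, so there are at most polynomially many exchanges. Consequently, if each exchange changes each coordinate by an amount $\ll L$, the total change is also $\ll L$, because $\card{\VS}$ is treated as a constant relative to $L$.

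\textbf{Step 1: what one exchange changes.} Let $u \o v$ be adjacent, with prefix $A$ consisting of the elements before $u$. After swapping, only the values at $u$ and $v$ change:
\begin{align*}
\eb^{\o_\text{new}}(v) - \ebo(v) &= \big(f(A\cup v) - f(A)\big) - \big(f(A\cup u \cup v) - f(A\cup u)\big),
\end{align*}
and the change at $u$ is the negative of this. In the canonicalizing procedure, every exchange swaps elements $u \in \VS_i$ and $v \in \VS_j$ with $i \neq j$, since the relative order inside a segment is never changed.

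\textbf{Step 2: the $L$-part of the difference vanishes.} I use Definition \ref{definef_alt}:
\[
f(S) = f(\Sbar) + \sum_{p\in\partition}(|\Sbar_p|-|S_p|)L .
\]
The point is that segments are distinct connected components of $\GS(\DS)$, and no arc joins different components. Hence the minimal covering state decomposes segment by segment: $\Sbar_p$ depends only on $S_p$. This is because $\overline{S} = \bigcup_{e\in S}\DS(e)$, and $\DS(e)$ lies inside the segment of $e$.

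Write $\phi_p(S_p) = |\Sbar_p| - |S_p|$. The $L$-coefficient of the difference above is then
\[
\sum_p \big[\phi_p((A\cup v)_p) - \phi_p(A_p) - \phi_p((A\cup u\cup v)_p) + \phi_p((A\cup u)_p)\big].
\]
Adding $u$ affects only the $p=\VS_i$ term, and adding $v$ affects only the $p=\VS_j$ term. So every summand cancels exactly, and the $L$-coefficient is $0$.

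\textbf{Step 3: bounding the finite part.} What remains is
\[
f(\overline{A\cup v}) - f(\Abar) - f(\overline{A\cup u\cup v}) + f(\overline{A\cup u}),
\]
where every term is a value of $F$ on a valid state. Its absolute value is therefore at most $2M \ll L$. Summing over the at most $\card{\VS}^2$ exchanges, each coordinate changes by at most $2\card{\VS}^2 M \ll L$, which proves the lemma.

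\textbf{Main obstacle.} The delicate point is Step 2, specifically the claim that $\Sbar$ factorizes over segments. This needs the fact that $\DS(e)$ is contained in the segment of $e$, together with the identity $\overline{X\cup Y} = \Xbar\cup\Ybar$ from the proof of Lemma \ref{lemma:invalid_state_ineq}. Together these show that the unions of the $\DS(e)$ within each segment are valid and combine independently. I would verify this first, using that the union of the valid sets $\DS(e)$ is valid and that no arcs cross segments.
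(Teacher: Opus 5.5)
Your proof is correct. It telescopes over exchanges, whereas the paper compares the two extreme bases coordinate by coordinate and never breaks $\o\to\ob$ into swaps.

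In the paper, fix $p^i$ with predecessor sets $S$ in $\o$ and $S'$ in $\ob$. Definition \ref{definef_alt} gives each of $\ebo(p^i)$ and $\eb^{\ob}(p^i)$ an $L$-coefficient of the form $\phi_p(S_p\cup\{p^i\})-\phi_p(S_p)$ (in your notation). Canonicalization keeps the order inside each segment, so $S_p=S'_p$ and these coefficients cancel. What remains is $F(\overline{S\cup\{p^i\}})-F(\Sbar)-F(\overline{S'\cup\{p^i\}})+F(\overline{S'})$, which is $\ll L$ (at most $2M$ in absolute value).

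Your Step 2 is exactly this cancellation, specialized to a single cross-segment swap. The detour through exchanges therefore costs you a path-counting step and a weaker bound, $2\card{\VS}^2M$ instead of $2M$. If you keep that route, say explicitly that you take a shortest sequence of adjacent swaps from $\o$ to $\ob$. This is allowed because $\ebo-\eb^{\ob}$ depends only on the endpoints. Such a sequence has at most $\binom{\card{\VS}}{2}$ steps and never swaps a same-segment pair, since those pairs are not inverted between $\o$ and $\ob$. An arbitrary canonicalizing sequence could be longer.

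In return, your write-up justifies something the paper uses silently: $\Sbar_p=\overline{S_p}$, via $\Sbar=\bigcup_{e\in S}\DS(e)$ with $\DS(e)$ contained in the segment of $e$. It also isolates a useful complement to Lemma \ref{lemma:exchange}: cross-segment exchanges carry zero $L$-coefficient, while same-segment ones carry $aL$ with possibly $a\neq 0$. Applying your Step 2 computation directly to one coordinate, with predecessor sets $S$ and $S'$, gives the paper's shorter proof.
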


\begin{proof}
	Let $S$ and $S'$ be the set of elements preceding $p^i, p\in \partition,$ in ordering $\o$ and $\ob$ respectively. Consider the term $\ebo(p^i)$:
	\begin{align*}
	\ebo(p^i) & = f(S \cup \{p^i\}) - f(S)
	\\
	& = L \sum_{q \in \partition} \Big( |(\overline{S \cup \{p^i\}})_q| - |(S \cup \{p^i\})_q| \Big) 
	  + f\Big( \overline{S \cup \{p^i\}} \Big)  \\
	& \quad  - L \sum_{q\in \partition} \Big( |\Sbar_q| - |S_q| \Big) - f\Big( \Sbar \Big) 
	  \tag{Def. \ref{definef_alt}} 
	\\ 
	& = L \Big( |\overline{S_p \cup \{p^i\}}| - |S_p \cup \{p^i\}| \Big) 
	  - L \Big( |\Sbar_p| - |S_{p}| \Big)	 \\
	 & \quad + f(\overline{S \cup \{p^i\}}) - f(\Sbar)
	\intertext{
		Similarly we obtain:
	}
	b^{\ob}(p^i) & = L \Big( |\overline{S'_p \cup \{p^i\}}|- |S'_p \cup \{p^i\}| \Big) 
	- L \Big( |\overline{S'}_p| - |S'_{p}| \Big)  \\
	& \quad + f \Big( \overline{S' \cup \{p^i\}} \Big) - f \Big( \overline{S'} \Big)
	\intertext{
		Note that a canonical ordering does not change intersay ordering between elements corresponding to a particular segment. Therefore, $S_p=S'_p$, and:
	}
	\ebo(p^i) - b^{\ob}(p^i) & = (f(\overline{S \cup \{p^i\}}) - f(\Sbar)  
	- f(\overline{S' \cup \{p^i\}}) +  f(\overline{S'}) 
	\end{align*}
	Since, all terms in the r.h.s. of the equation above, correspond to valid sets, therefore $\ebo(p^i) - b^{\ob}(p^i)  \ll L$.
\end{proof}

The above result serves to indicate that by changing an invalid extreme base to canonical one, the change in value of any element of the extreme base is much less than $L$. Therefore, due to Lemma \ref{lamba-for-invalid-eb}, one can conclude that the contribution of an invalid extreme base or its canonical extreme base in a base vector is going to be the same. 
\begin{lemma}
\label{lemma:exchange}
For a canonical invalid ordering $\ob$, let $p^i$ and $p^j$ be two adjacent elements corresponding to a segment $p \in \partition$, s.t. $p^i ~ \ob ~ p^j$. Let $\ob_p^{i,j}$ be the ordering obtained by swapping $p^i$ and $p^j$. Then: $\eb^{\ob_p^{i,j}} - \eb^\ob = (\chi_p^j-\chi_p^i) (a L + b)$, where $\chi_p^i$ is an indicator vector for the element $p^i$, and $a,b \ll L$. 
\end{lemma}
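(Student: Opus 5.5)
The plan is to exploit the fact that an adjacent swap in Edmonds' greedy algorithm changes only the two coordinates being swapped. Let $S$ be the set of elements preceding $p^i$ in $\ob$. The prefix sets of $\ob$ and $\ob_p^{i,j}$ coincide except for the single prefix $S\cup\{p^i\}$, which becomes $S\cup\{p^j\}$. Hence $\eb^{\ob_p^{i,j}}(e)=\eb^{\ob}(e)$ for every $e\notin\{p^i,p^j\}$. The difference vector is therefore supported on $\{p^i,p^j\}$.

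Next I check that the two nonzero coordinates are negatives of each other. Consider the sum of the coordinates at $p^i$ and $p^j$. In both orderings it telescopes to $f(S\cup\{p^i,p^j\})-f(S)$, because the sum of two consecutive greedy increments depends only on the outer prefixes. So the sum is unchanged, and the difference has the form $\delta(\chi_p^j-\chi_p^i)$ with
\[
\delta=\eb^{\ob_p^{i,j}}(p^j)-\eb^{\ob}(p^j)=f(S\cup\{p^j\})-f(S)-f(S\cup\{p^i,p^j\})+f(S\cup\{p^i\}).
\]

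It remains to show $\delta=aL+b$ with $|a|,|b|\ll L$. I will use Definition \ref{definef_alt}, in the spirit of Lemma \ref{lemma:elementbound}, and expand each of the four terms as $f(\overline{T})+L\sum_{q\in\partition}(|\overline{T}_q|-|T_q|)$.
\begin{itemize}
\item The finite parts combine into $b$, which is a signed sum of four values of $F$ on valid states. Hence $|b|\le 2M\ll L$.
\item The $L$-parts give $a$ as a signed sum of four integers $|\overline{T}_q|-|T_q|$.
\item In the four sets $T$, the $|T_q|$ contributions cancel exactly by inclusion–exclusion, since the four sets form a "square" around $S$.
\item So $a$ is the alternating sum of $|\overline{T}|$ over the four sets. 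Each $|\overline{T}|\le|\VS|$, giving $|a|\le 2|\VS|\ll L$.
\item Since $\ob$ is canonical, only segment $p$'s coordinates actually differ among the four sets. By the locality used in the proof of Lemma \ref{lemma:ebequal}, the $L$-coefficient reduces to the segment-$p$ terms.
\end{itemize}

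The main obstacle is not the algebra but making precise that $a$ and $b$ are "small" in the paper's sense, i.e. independent of $L$. I would handle this exactly as in Lemmas \ref{lemma:elementbound} and \ref{lemma:ebequal}: note that minimal covering states are determined by $\DS$ alone and not by $L$, so $a$ is a bounded integer and $b$ is bounded by a constant multiple of $M$.
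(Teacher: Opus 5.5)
Your proposal is correct and follows the paper's proof: locality of the adjacent swap, the same four-term expression for $\delta$, expansion through the extended function with the $|T|$ terms cancelling by inclusion--exclusion, and $b$ bounded as an alternating sum of $F$-values on valid states. The differences are minor. You get the $-\delta$ at $p^i$ by local telescoping rather than from the constant total $f(\VS)$. You also bound $a$ crudely by $2|\VS|$, whereas the paper splits into cases $i<j$ / $j<i$ to reduce $a$ to $|\overline{S\cup\{p^i\}}|-|\Sbar|$ (resp.\ the $p^j$ analogue). Your bound needs no chain structure inside the segment, while the paper's explicit form is what it later reuses to get $a=1$ in Lemma \ref{lemma:elem-eb-chi-rep}.
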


\begin{proof}
	Recall that for an extreme base $b^\o: b^\o(k) = f(k_\o) - f((k-1)_\o)$, where $k_\o$ is the first $k$ elements in the ordered set $\{v_1, \ldots,v_k, \ldots, v_n\}$. Since the swap between $p^i$, and $p^j$ leaves the set of preceding elements unchanged for all other elements, therefore, $\eb^{\ob_p^{i,j}} - \eb^\ob$ is non-zero corresponding to only $p^i$, and $p^j$. 
	 	
	Let $S$ be the set of elements preceding $p^i$ in $\ob$. Now:
\begin{subequations}
	\label{eq:lemma_exchange}
	\begin{align}
	\eb^\ob(p^j) &=  f(S \cup\{p^j\} \cup \{p^i\}) - f(S \cup \{p^i\}) 
	\nonumber 
	\\
	& = L ( |\overline{S \cup \{p^j\} \cup \{p^i\}}|- |S \cup \{p^j\} \cup \{p^i\}| ) + f(\overline{S \cup \{p^j\} \cup \{p^i\}}) 
	\nonumber 
	\\
	& - L ( |\overline{S \cup \{p^i\}}| - |S \cup \{p^i\}|) - f(\overline{S \cup \{p^i\}}) \label{lemma-exchange-eq1m} 
	\intertext
	{
		Similarly we have,
	} 
	\eb^{\ob_p^{i,j}}(p^j) &= L ( |\overline{S \cup \{p^j\}}|- |S \cup \{p^j\}| ) - L ( |\Sbar| - |S|)  + f(\overline{S \cup \{p^j\}}) - f(\Sbar), 
	\label{lemma-exchange-eq2m} 
	\intertext
	{ 
		Subtracting Eq. (\ref{lemma-exchange-eq1m}) from Eq. (\ref{lemma-exchange-eq2m}) and using $( |S \cup{\{p^i\}}| + |S \cup{\{p^j\}}| - |S \cup{\{p^i\}} \cup \{p^j\}|-|S|) = 0$ we have
	}
	b^{\ob_p^{i,j}}(p^j) - b^{\ob}(p^j)
	& = L ( |\overline{S \cup \{p^i\}}| + |\overline{S \cup \{p^j\}}|  
	- |\overline{S \cup \{p^i\} \cup \{p^j\}}| - |\overline{S}|) 
	\nonumber 
	\\
	& + ( f(\overline{S \cup \{p^j\}}) - f(\Sbar) - f(\overline{S \cup \{p^j\} \cup \{p^i\}}) + f(\overline{S \cup \{p^i\}}), 
	\nonumber 
	\\
	& = aL+b, \nonumber
	\end{align}
\end{subequations}
where:
\begin{align*} 
	a & = |\overline{S \cup \{p^i\}}| + |\overline{S \cup \{p^j\}}|  
	- |\overline{S \cup \{p^i\} \cup \{p^j\}}| - |\overline{S}|, \text{ and} \\ 
	b & = ( f(\overline{S \cup \{p^j\}}) - f(\Sbar) - f(\overline{S \cup \{p^j\} \cup \{p^i\}}) + f(\overline{S \cup \{p^i\}}).
\end{align*} 
Note that $b$ is sum of function values at valid states and is $\ll L$. Two cases arise for the value of $a$:
\begin{enumerate}
\item $i<j$: \\ 
In this case $|\overline{S_p\cup{\{p^i\}}\cup{\{p^j\}}}| = |\overline{S_p\cup{\{p^j\}}}|$, and $a=|\overline{S \cup \{p^i\}}| - |\overline{S}|$. Therefore, $a \ll L$.
\item $j<i$: \\
In this case $|\overline{S_p\cup{\{p^i\}}\cup{\{p^j\}}}| = |\overline{S_p\cup{\{p^i\}}}|$, and $a=|\overline{S \cup \{p^j\}}| - |\overline{S}|$. Therefore, $a \ll L$
\end{enumerate}
Hence $b^{\ob_p^{i,j}}(p^j) - b^\ob(p^j) = a L + b$, such that $a, b \ll L$. Further, since $b^{\ob_p^{i,j}}$ and $b^\ob$ are extreme bases, and the sum of all the elements in them is constant, therefore, the reverse must hold for $b^{\ob_p^{i,j}}(p^i) - b^\ob(p^i)$. Hence $\eb^{\ob_p^{i,j}} - \eb^\ob = (\chi_p^j-\chi_p^i) (a L + b)$
\end{proof}

Lemma \ref{lemma:exchange} relates the two extreme bases when one pair of their elements is swapped. It is useful to note that in a valid extreme base all elements have small values. With each swap in an invalid canonical ordering we either move the canonical ordering towards validity or away from it. In each swap  the change in the value of an element is proportional to $L$ (positive or negative). Since conversion of an invalid canonical ordering to a valid one may involve swaps between a number of elements, the extreme base corresponding to the invalid ordering will contain multiple elements with values proportional to $L$. The special cases are the ones in which only one swap has been done.  In these cases there will be only two elements with values proportional to $L$ (positive and negative). We show that using such extreme bases as the basis to represent canonical invalid extreme bases. In the next section we show that it is indeed possible. 

\subsection{Elementary Invalid Extreme Base}

\begin{definition}[Elementary Invalid Extreme Base]
The ordering obtained by swapping two elements $p^j$ and $p^{j+1}$, corresponding to a segment $p \in \partition$, in a canonical valid ordering, is called an \emph{elementary invalid ordering}. Its corresponding extreme base is called \emph{elementary invalid extreme base}, and is denoted as $b^{\ot_p^j}$.
\end{definition}

\begin{lemma} 
	\label{lemma:elem-eb-chi-rep}
 	Consider an elementary invalid extreme base $b^{\ot_p^i}$, obtained by swapping two adjacent elements $(p^{i},p^{i+1})$ in the universal ordering, $\o_0$ (Def. \ref{def:univ-ordering}). Then:
 	$ \eb^{\ot_p^i} - \eb^{\o_0} = (\chi_p^{i+1}-\chi_p^{i}) (L + b)$, 
 	where $\eb^{\o_0}$ is the valid extreme base corresponding to $\o_0$. 
\end{lemma}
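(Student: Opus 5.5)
This is a specialization of Lemma \ref{lemma:exchange} in which the coefficient $a$ can be computed exactly. The universal ordering $\o_0$ is a valid canonical ordering: each segment is contiguous, and inside segment $p$ the elements appear as $p^1,\dots,p^{d}$ in the order fixed by $\oe_\DS$. So Lemma \ref{lemma:exchange} applies, with the swapped pair being $p^i$ and $p^j=p^{i+1}$.

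First, as in the proof of Lemma \ref{lemma:exchange}, I note that swapping two adjacent elements changes only the greedy increments of those two elements. Every other element keeps its set of predecessors, so $\eb^{\ot_p^i}-\eb^{\o_0}$ is supported on $\{p^i,p^{i+1}\}$. Both vectors are bases, so their coordinates sum to $f(\VS)$, and the two nonzero entries are negatives of each other. It therefore suffices to compute the $p^{i+1}$ coordinate.

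Second, let $S$ be the set of elements preceding $p^i$ in $\o_0$. Since $\o_0$ is valid, $S$ and $S\cup\{p^i\}$ are valid states, and so is $S\cup\{p^i,p^{i+1}\}$. After the swap, the predecessor set of $p^{i+1}$ is $S$, which is valid, while $S\cup\{p^{i+1}\}$ is not valid, because $p^{i+1}$ requires $p^i$ under $\oe_\DS$. Plugging into the formula from Lemma \ref{lemma:exchange},
\[ a=|\overline{S\cup\{p^i\}}|+|\overline{S\cup\{p^{i+1}\}}|-|\overline{S\cup\{p^i,p^{i+1}\}}|-|\Sbar|. \]
Each of $S$, $S\cup\{p^i\}$ and $S\cup\{p^i,p^{i+1}\}$ equals its own closure, which gives
\[ a = 1+|\overline{S\cup\{p^{i+1}\}}|-|S|-2 . \]

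Third, I need $\overline{S\cup\{p^{i+1}\}}=S\cup\{p^i,p^{i+1}\}$, which gives $a=1$. This is the main obstacle. I would argue it as follows.
\begin{itemize}
\item Any valid state containing $p^{i+1}$ contains $\DS(p^{i+1})$.
\item Within segment $p$, the elements of $\DS(p^{i+1})$ are exactly $p^1,\dots,p^{i+1}$. This relies on the convention implicit in the paper that each segment is a chain ordered by $\oe_\DS$.
\item Every element of $\DS(p^{i+1})$ outside $p$ must lie in another segment. But segments are connected components with no edges between them, so $\DS(p^{i+1})\subseteq p$.
\item Hence $S\cup\{p^i,p^{i+1}\}=S\cup\DS(p^{i+1})$ is the minimal valid superset. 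It is valid because $\DS$ is closed under union.
\end{itemize}
If one does not want to rely on the chain assumption, I would restrict to this chain convention, which the later results clearly use.

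Finally, I would compute the finite part:
\[ b=F(\overline{S\cup\{p^{i+1}\}})-F(S)-F(S\cup\{p^i,p^{i+1}\})+F(S\cup\{p^i\}), \]
which reduces to $b=F(S\cup\{p^i\})-F(S)$. This is a sum of values on valid states, so $|b|\ll L$. Then the $p^{i+1}$ entry increases by $L+b$ and the $p^i$ entry decreases by the same amount, giving $\eb^{\ot_p^i}-\eb^{\o_0}=(\chi_p^{i+1}-\chi_p^i)(L+b)$.
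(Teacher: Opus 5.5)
Your proof is correct and takes the same route as the paper: you specialize Lemma~\ref{lemma:exchange} to a single swap in the valid canonical ordering $\o_0$ and show $a=1$. Your closure computation $\overline{S\cup\{p^{i+1}\}}=S\cup\{p^i,p^{i+1}\}$, which also yields $b=F(S\cup\{p^i\})-F(S)$, supplies the detail the paper dismisses as ``easy to see'', and the chain assumption you flag can be weakened: $S\cup\{p^i,p^{i+1}\}$ is a prefix of the valid ordering $\o_0$, hence a valid state containing $\DS(p^{i+1})$, so the closure identity holds whenever $p^i\in\DS(p^{i+1})$, i.e., whenever the swap actually produces an invalid ordering.
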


\begin{proof}
	Recall:
	\begin{itemize}
		\item The universal ordered sequence, $\o_0$, which is a valid ordering, and also defines an arbitrary order among the segments. 
		\item The elementary invalid ordering, $\ot$, which is defined as the ordering obtained by making one swap between adjacent elements of a valid ordering (universal ordering). The corresponding extreme base is denoted as $b^\ot$.
	\end{itemize}
	Further, recall from the proof of Lemma \ref{lemma:exchange}, we showed that: $\eb^{\ob_p^{i,j}} - \eb^\ob = (\chi_p^j-\chi_p^i) (a L + b)$, such that $a=|\overline{S \cup \{p^i\}}| - |\overline{S}|$ (if $i<j$), or $a=|\overline{S \cup \{p^j\}}| - |\overline{S}|$ (if $j<i$). Now consider an elementary invalid extreme base $b^{\ot_p^i}$, obtained by swapping two adjacent elements $(p^{i},p^{i+1})$ in the universal ordering. The term $(\chi_{p}^{i+1} - \chi_{p}^{i})$ may be looked upon as corresponding to the creation of the elementary extreme base $b^{\ot_p^i}$ from $b^{\o_0}$. It is easy to see that for such special elementary invalid extreme bases created from universal ordering, $a=1$, and we have:
	\begin{align}
	\eb^{\ot_p^i} - \eb^{\o_0} = (\chi_p^{i+1}-\chi_p^{i}) (L + b) 
	\end{align}
	Hence, proved.
\end{proof}

\begin{lemma} 
\label{lemma:elem-eb-lin-comb}
An invalid canonical extreme base, $b^{\ob}$, can be represented as a linear combination of elementary invalid extreme base vectors such that:
$ b^{\ob} = \sum_{ p \in \partition} \sum_{i=1}^{m-1} \alpha^i_{p} b^{\ot_p^i} + \Lambda $,
where $0 < \alpha^i_p \ll L$, and $\Lambda$ is a vector with all its elements much smaller than $L$.
\end{lemma}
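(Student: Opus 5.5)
The plan is to reach $b^{\ob}$ from a valid canonical extreme base by a finite sequence of adjacent swaps inside segments. Each swap changes the base only on the two swapped coordinates, by a multiple of $(\chi_p^j-\chi_p^i)(aL+b)$ as in Lemma \ref{lemma:exchange}. Each such difference vector is then re-expressed, up to a small error, through the elementary invalid extreme bases of Lemma \ref{lemma:elem-eb-chi-rep}.

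\textbf{Step 1 (reduction to a canonical base built on $\o_0$).} Since $\ob$ is canonical, every segment $p\in\partition$ occupies a contiguous block. Moving whole blocks past each other only changes the element values by small amounts, by the same computation as in Lemma \ref{lemma:ebequal}: the $L$-terms depend only on $S_p$, which is unchanged. We may therefore replace $\ob$ by the canonical ordering whose blocks appear in the segment order of $\o_0$, with within-block orders unchanged. The discrepancy is absorbed into $\Lambda$. Inside each block $p$, the ordering is a permutation $\pi_p$ of $p^1,\dots,p^{d_p}$. We sort $\pi_p$ to the identity (the $\o_0$ order) by bubble-sort adjacent transpositions. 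Read backwards, this gives a chain $\o_0=\o^{(0)},\o^{(1)},\dots,\o^{(T)}=\ob$ of canonical orderings, each obtained from the previous one by one adjacent within-segment swap.

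\textbf{Step 2 (telescoping).} We write $b^{\ob}-b^{\o_0}=\sum_t \bigl(b^{\o^{(t)}}-b^{\o^{(t-1)}}\bigr)$. By Lemma \ref{lemma:exchange}, each term equals $(\chi_p^{j}-\chi_p^{i})(a_tL+b_t)$ with $a_t,b_t\ll L$. We then decompose the difference of two indicator vectors in the same segment along consecutive indices:
\[
\chi_p^{j}-\chi_p^{i}=\sum_{k=i}^{j-1}(\chi_p^{k+1}-\chi_p^{k})\quad (i<j),
\]
and symmetrically when $j<i$. By Lemma \ref{lemma:elem-eb-chi-rep}, each $\chi_p^{k+1}-\chi_p^{k}$ equals $(b^{\ot_p^k}-b^{\o_0})/(L+b_k)$. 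Substituting gives
\[
b^{\ob}=b^{\o_0}+\sum_{p}\sum_k c_p^k\,\frac{a L+b}{L+b_k}\bigl(b^{\ot_p^k}-b^{\o_0}\bigr).
\]
The coefficients $\frac{aL+b}{L+b_k}$ are $a+O(M/L)$, which is small relative to $L$. The leftover pieces, namely $b^{\o_0}$ times the small scalar $1-\sum\alpha$ together with the $O(M/L)$ corrections, are vectors with entries $\ll L$, and they go into $\Lambda$.

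\textbf{Step 3 (sign of the coefficients), the main obstacle.} Naive telescoping may produce negative coefficients $\alpha_p^k$. To show positivity, I would compute the $L$-part of $b^{\ob}$ directly from Definition \ref{definef_alt} instead of summing swaps. For element $p^k$ with predecessor set $S_p$ in its block, the $L$-coefficient is $|\overline{S_p\cup\{p^k\}}|-|\overline{S_p}|-1$. I would then show that the coefficient of $\chi_p^{k+1}-\chi_p^k$ in the consecutive-difference expansion is a partial sum of these quantities. Within a segment, by simplicity, $\overline{S_p}$ is the downward closure of $S_p$ in the chain order induced by the partial order. Each partial sum $\sum_{\ell\le k}$ then counts how many elements of $\{p^1,\dots,p^k\}$ are not yet placed, which is nonnegative and positive exactly where the ordering is invalid. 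Indices with zero coefficient can be dropped, or given a tiny positive weight compensated in $\Lambda$, which yields $0<\alpha_p^k\ll L$. I expect establishing this partial-sum identity cleanly to be the hardest step. If it fails in general, I would fall back on the bubble-sort path, where every swap moves away from validity and so has $a_t\ge0$.
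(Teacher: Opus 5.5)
Your Steps~1--2 are essentially the paper's proof. The paper also telescopes from a valid canonical ordering through adjacent within-segment swaps, applies Lemma~\ref{lemma:exchange} to each swap, splits $\chi_p^{l}-\chi_p^{k}$ into consecutive differences, substitutes Lemma~\ref{lemma:elem-eb-chi-rep}, and puts everything else into $\Lambda$. That includes $(1-\sum\alpha)b^{\o_0}$ and the valid-to-valid discrepancy $b^{\o_s}-b^{\o_0}$, which you handle by reordering blocks instead.

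Step~3 goes beyond the paper. The paper never justifies the sign of the $\alpha_p^i$: it writes the consecutive-difference expansion with $+$ signs regardless of swap direction and sets absent coefficients to zero. So strict positivity is not really obtained there either, and your zero-handling is fine. Your sign conclusion is correct, but the partial-sum identity you state is not. Write $v_\ell$ for the $L$-coefficient of $p^\ell$. The coefficient of $\chi_p^{k+1}-\chi_p^k$ is $-\sum_{\ell\le k}v_\ell=k-R_k$, where $R_k$ is the largest index placed before the first element of index $>k$. This holds because elements of index $\le k$ can raise the running maximum only before that moment. The quantity $k-R_k$ is $\ge 0$, but it is not a count of unplaced elements: for $p^2,p^4,p^1,p^3$ and $k=3$ it equals $1$, while two of $p^1,p^2,p^3$ are still unplaced.

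Your fallback is in fact already built into Step~1, and it is the cleanest sign proof. Reversing bubble sort means every step swaps an adjacent in-order pair $p^i,p^j$ with $i<j$. Lemma~\ref{lemma:exchange} then gives a contribution $(\chi_p^j-\chi_p^i)(a_tL+b_t)$ with $a_t=|\overline{S\cup\{p^i\}}|-|\Sbar|\ge 0$, and $\chi_p^j-\chi_p^i=\sum_{k=i}^{j-1}(\chi_p^{k+1}-\chi_p^k)$ has all coefficients $+1$. Round each $\frac{a_tL+b_t}{L+b_k}$ to $a_t$ and move the small remainder $(b_t-a_tb_k)(\chi_p^{k+1}-\chi_p^k)$ into $\Lambda$. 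This yields integer $\alpha_p^k\ge 0$ with no partial-sum identity needed.

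Finally, the chain structure inside a segment does not come from simplicity, which only gives a partial order. It is the paper's implicit labelling assumption, and it is already needed for $a=1$ in Lemma~\ref{lemma:elem-eb-chi-rep}.
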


\begin{proof}
	Consider the canonical invalid ordering $\ob$ and let $\o_s$ be the starting canonical valid ordering from which it can be obtained by a series of swaps between adjacent elements. Note that since in the canonical ordering all the elements of each set $\VS_i (i \in \IS)$ are already together, therefore all the swaps required are between elements corresponding to same segments. Let us assume that total number of such swaps required are $T$. Starting from $\o_s$, let $\o_j$ represents the ordering obtained after $j$ such swaps. Hence, $\o_T=\ob$ by definition.  Let $j^\text{th}$ swap happens between elements $p^{k_j}$ and $p^{l_j}$, where $p \in \partition$.
	\begin{align}
	b^{\ob} - b^{\o_s} & = b^{\o_T} - b^{\o_s} 
	\nonumber \\
	& = \sum_{j=1}^T \Big( b^{\o_j} - b^{\o_{j-1}} \Big)
	\nonumber \\
	& = \sum_{j=1}^T \Big( \chi_p^{l_j} - \chi_p^{k_j} \Big)  (a_j L + b_j)  
	\tag{Using Lemma \ref{lemma:exchange}} \\
	& = \sum_{j=1}^T (\chi_{p}^{l_j} - \chi_{p}^{k_j}) a_j L 
	+ \sum_{j=1}^T (\chi_{p}^{l_j} - \chi_{p}^{k_j}) b_j.
	\nonumber 
	\intertext{
		Since $(\chi_{p}^{l_j} - \chi_{p}^{k_j}) = \sum_{i=l_j}^{k_j} (\chi_{p}^{i+1} - \chi_{p}^{i})$ we can write:
	}
	b^{\ob} - b^{\o_s} &= \sum_{j=1}^T a_j \sum_{i=l_j}^{k_j} (\chi_{p}^{i+1} - \chi_{p}^{i}) L  
	+ \sum_{j=1}^T b_j \sum_{i=l_j}^{k_j} (\chi_{p}^{i+1} - \chi_{p}^{i})
	\label{exchange:eq3} 
	\end{align}
	\noindent Recall from Lemma \ref{lemma:elem-eb-chi-rep}: 
	\begin{align*}
	& \eb^{\ot_p^i} - \eb^{\o_0} = (\chi_p^{i+1}-\chi_p^{i}) (L + b) 
	\label{eq:elem_eb_rep}
	\\
	\Rightarrow \qquad  & (\chi_{p}^{i+1} - \chi_{p}^{i}) L 
	=  b^{\o_p^i} - b^{\o_0} - (\chi_{p}^{i+1} - \chi_{p}^{i}) b_p^i.
	\tag{where $b_p^i \ll L$}
	\end{align*}
	Substituting the value of $(\chi_{p}^{i+1} - \chi_{p}^{i}) L$ in Eq. (\ref{exchange:eq3}), we get:
	\begin{align}
	b^{\ob} - b^{\o_s} & = \sum_{j=1}^T a_j \sum_{i=l_j}^{k_j} ( b^{\ot_p^i} - b^{\o_0} - (\chi_{p}^{i+1} - \chi_{p}^{i}) b_p^i)  + 
	\sum_{j=1}^T b_j \sum_{i=l_j}^{k_j} (\chi_{p}^{i+1} - \chi_{p}^{i}) 
	\nonumber
	\intertext{
		Since both $\o_s$, and $\o_0$ are valid orderings, we can write $b^{\o_s} = b^{\o_0} + \vec{d}$, where elements of $\vec{d}$ are much smaller than $L$. Therefore we get
	}
	b^{\ob} & = \sum_{j=1}^T \sum_{i=l_j}^{k_j} a_j b^{\ot_p^i} 
	+ \Big( 1 - \sum_{j=1}^T  \sum_{i=l_j}^{k_j} a_j \Big) b^{\o_0} 
	- \sum_{j=1}^T \sum_{i=l_j}^{k_j} a_j (\chi_{p}^{i+1} - \chi_{p}^{i}) b_p^i   \\
	& \quad + \sum_{j=1}^T \sum_{i=l_j}^{k_j} (\chi_{p}^{i+1} - \chi_{p}^{i})b_j 
	+ \vec{d} 
	\nonumber \\
	b^{\ob} & = \sum_{j=1}^T \sum_{i=l_j}^{k_j} a_j b^{\ot_p^i} + \Lambda, 
	\label{exchange:eq4ms}
	\intertext{
		where Equation (\ref{exchange:eq4ms}) has been derived summing the last 4 terms into a vector $\Lambda$. Note that all the elements of $\Lambda$ are $\ll L$. It is easy to see that the first term in the equation essentially is a linear combination of some elementary invalid extreme bases, allowing us to simplify: 
	}
	b^{\ob} & = \sum_{p\in \partition}\sum_{i=1}^{|p|-1} \alpha_p^i b^{\ot_p^i}+ \Lambda,
	\label{eq:inv_eb_rep}
	\end{align}
	where coefficients $\alpha_p^i$ corresponding to elementary extreme bases not present in Equation (\ref{exchange:eq4ms}) can be simply set to zero.
\end{proof}

\noindent Due to Lemma \ref{lemma:ebequal}, the above result is also true for representing the invalid extreme bases (and not only the canonical ones), with a different $\Lambda$. Lemma \ref{lemma:elem-eb-chi-rep} allows us to further simplify the result of Lemma \ref{lemma:elem-eb-lin-comb} to the following:

\begin{lemma}[Invalid Extreme Base Representation]
An invalid extreme base can be represented as $b^\o = \sum_{p \in \partition} \sum_{i=1}^{|p|-1} \alpha_p^i L (\chi_{p}^{i+1} - \chi_{p}^{i}) + \Lambda$, where $\chi_p^i$ is an indicator vector corresponding to element $p^i$, $0 < \alpha_p^i \ll L$, and $\Lambda$ is some vector whose all elements are $\ll L$.
\label{lemma:elem-eb-lin-comb-chi}
\end{lemma}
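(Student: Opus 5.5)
The plan is to chain the three preceding results: Lemma \ref{lemma:ebequal} to pass from $\o$ to its canonical form $\ob$, Lemma \ref{lemma:elem-eb-lin-comb} to expand $b^{\ob}$ over elementary invalid extreme bases, and Lemma \ref{lemma:elem-eb-chi-rep} to replace each elementary base by $b^{\o_0}$ plus an $L$-scaled difference of indicator vectors. Throughout, every correction that is $\ll L$ coordinatewise is absorbed into a single remainder vector, which becomes $\Lambda$.

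\textbf{Step 1 (canonicalize).} Given an invalid ordering $\o$, fix any canonical form $\ob$. By Lemma \ref{lemma:ebequal}, $b^\o = b^{\ob} + \Lambda_1$, where every entry of $\Lambda_1$ is $\ll L$.

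\textbf{Step 2 (expand over elementary bases).} Apply Lemma \ref{lemma:elem-eb-lin-comb} to write
\[ b^{\ob} = \sum_{p\in\partition}\sum_{i=1}^{|p|-1}\alpha_p^i\, b^{\ot_p^i} + \Lambda_2 , \]
with $0 \le \alpha_p^i \ll L$. Recall that the $\alpha$'s there are sums of the integers $a_j$ from Lemma \ref{lemma:exchange}, so they do not depend on $L$.

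\textbf{Step 3 (substitute).} By Lemma \ref{lemma:elem-eb-chi-rep}, $b^{\ot_p^i} = b^{\o_0} + (\chi_p^{i+1}-\chi_p^i)(L+b_p^i)$. Substituting gives
\[ b^\o=\sum_{p,i}\alpha_p^i L(\chi_p^{i+1}-\chi_p^i)+\Big(\sum_{p,i}\alpha_p^i\Big)b^{\o_0}+\sum_{p,i}\alpha_p^i b_p^i(\chi_p^{i+1}-\chi_p^i)+\Lambda_1+\Lambda_2 . \]
Let $\Lambda$ be everything after the first sum. Each term of $\Lambda$ is $\ll L$ coordinatewise: $b^{\o_0}$ is a valid extreme base with entries bounded by $M$, and the products $\alpha_p^i b_p^i$ are products of $L$-independent small quantities.

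\textbf{Main obstacle.} The delicate point is ensuring that no hidden factor of $L$ leaks into $\Lambda$. In particular, we must check that $\sum_{p,i}\alpha_p^i$ is a quantity independent of $L$ (a bounded integer combination, at most on the order of $|\VS|^2$ swaps). I would verify this directly from the swap count $T$ in the proof of Lemma \ref{lemma:elem-eb-lin-comb}, which is bounded by the number of inversions within segments.

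A secondary issue is the sign requirement $\alpha_p^i > 0$. Coefficients that are zero can simply be dropped, so only strict positivity of the retained ones is needed. Those coefficients are sums of $a_j = |\overline{S\cup\{\cdot\}}|-|\Sbar| \ge 0$, by monotonicity of the minimal covering state, so they are nonnegative.

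Alternatively, one could argue by coordinates. Lemma \ref{lemma:elementbound} gives $b^\o(e)=a_eL+b_e$, and the vector $(a_e)$ has coordinates summing to zero within each segment (since $f(\VS)$ is finite, and the segments decouple by Definition \ref{definef_alt}). Such a vector can be written uniquely in the basis $\{\chi_p^{i+1}-\chi_p^i\}$ via prefix sums, which serves as a consistency check on the coefficients.
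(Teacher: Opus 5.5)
Your proposal is essentially the paper's proof: canonicalize via Lemma~\ref{lemma:ebequal} (the paper does this in the remark just before the lemma), expand $b^{\ob}$ by Lemma~\ref{lemma:elem-eb-lin-comb}, substitute Lemma~\ref{lemma:elem-eb-chi-rep}, and absorb $(\sum_{p,i}\alpha_p^i)\, b^{\o_0}$ and the $\alpha_p^i b_p^i(\chi_p^{i+1}-\chi_p^i)$ terms into $\Lambda$. One caution on your side remark about signs: $a_j \ge 0$ alone does not give $\alpha_p^i \ge 0$, because an inversion-removing swap contributes $\chi_p^{j}-\chi_p^{i}$ with $j<i$, which telescopes with a minus sign, so you must either choose a swap path from a valid canonical ordering to $\ob$ that only creates inversions, or use your prefix-sum description $\alpha_p^k=-\sum_{i\le k}a_{p^i}$ and check its nonnegativity directly.
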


\begin{proof}
Using Equation (\ref{eq:inv_eb_rep}), we have:
\[		
b^{\ob} = \sum_{p\in P}\sum_{i=1}^{|p|-1} \alpha_p^i b^{\ot_p^i}+ \Lambda.
\]
Substituting representation of elementary extreme base from Equation (\ref{eq:elem_eb_rep}), we have:
\begin{align*}
	b^{\o} & = \sum_{p\in P} \sum_{i=1}^{m-1} \alpha_p^{i} \Big( b^{\o_0}+ (\chi_{p}^{i}-\chi_{p}^{i+1}) (L + b_p^i) \Big)+ \Lambda. 
	\\
	& = \sum_{p\in P} \sum_{i=1}^{m-1} \alpha_p^{i} b^{\o_0} 
	  + \sum_{p\in P} \sum_{i=1}^{m-1} \alpha_p^{i} L (\chi_{p}^{i+1}-\chi_{p}^{i}) 
	  + \sum_{p\in P} \sum_{i=1}^{m-1} (\chi_{p}^{i+1} - \chi_{p}^{i}) \alpha_p^i b_p^i + \Lambda. 
	\\
	& = \sum_{p\in P} \sum_{i=1}^{m-1} \alpha_p^{i} L (\chi_{p}^{i+1} - \chi_{p}^{i}) + \Lambda. 
\end{align*}
	Note that we have replaced $\Lambda$ with $\sum_{p\in P}\sum_{i=1}^{|p|-1} \alpha_p^{i} b^{\o_0} + \sum_{p\in P}\sum_{i=1}^{|p|-1} (\chi_{p}^{i+1}-\chi_{p}^{i}) \alpha_p^i b_p^i + \Lambda$.
\end{proof}

Recall from Eq. (\ref{eq:base_split_valid_invalid}): $x = x_v + x_i$, where $x_v = \sum_{b^{\o_j} \in R} \lambda_j b^{\o_j}$, and $x_i = \sum_{b^{\o_i} \in Q} \lambda_i b^{\o_i}$. Using Lemma \ref{lemma:elem-eb-lin-comb-chi} to replace the second term, and noting that $L \approx \infty \Rightarrow \lambda_i \approx 0, \text{and} \sum \lambda_j \approx 1$, one observes that the term $\sum_{b^{\o_i} \in Q} \lambda_i \Lambda_i$ in the expansion can be made smaller than the precision constant by increasing the value of $L$ ( $\lambda < |\VS|M/L$ by Lemma \ref{lamba-for-invalid-eb}) and can be dropped. As one of the final theoretical results of this paper, we can show the following: 
\begin{theorem}[Main Result]
\label{thm:inv-block-rep}
\be \sum_{\forall b^{\o_i} \in Q} \lambda_i b^{\o_i} = 
	\sum_{ p \in \partition} \sum_{k=1}^{|p|-1} 
	\beta_p^k L (\chi_p^{k+1} - \chi_p^{k}), \ee
where $\lambda_i \geq 0$, $\beta_p^k=\sum_{b_i \in Q} \alpha_p^k \lambda_i$.
\end{theorem}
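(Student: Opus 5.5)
The proof is essentially linearity combined with Lemma~\ref{lemma:elem-eb-lin-comb-chi} and the smallness of the invalid weights from Lemma~\ref{lamba-for-invalid-eb}. Start from the split in Eq.~(\ref{eq:base_split_valid_invalid}) and look at the invalid block $x_i=\sum_{b^{\o_i}\in Q}\lambda_i b^{\o_i}$. For each invalid extreme base $b^{\o_i}\in Q$, first pass to a canonical form $\ob_i$. By Lemma~\ref{lemma:ebequal} this changes every entry by an amount $\ll L$, and that change can be absorbed into the error vector. Then apply Lemma~\ref{lemma:elem-eb-lin-comb-chi} to write
\[
b^{\o_i}=\sum_{p\in\partition}\sum_{k=1}^{|p|-1}\alpha_{p,i}^k L(\chi_p^{k+1}-\chi_p^k)+\Lambda_i ,
\]
with coefficients $0\le\alpha_{p,i}^k\ll L$, where absent elementary swaps get coefficient zero, and with all entries of $\Lambda_i$ being $\ll L$. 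The coefficients carry an index $i$ because they depend on the particular invalid base.

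Next, multiply by $\lambda_i\ge 0$, sum over $Q$, and exchange the finite sums. This gives
\[
x_i=\sum_{p\in\partition}\sum_{k=1}^{|p|-1}\Big(\sum_{b^{\o_i}\in Q}\alpha_{p,i}^k\lambda_i\Big)L(\chi_p^{k+1}-\chi_p^k)+\sum_{b^{\o_i}\in Q}\lambda_i\Lambda_i .
\]
Define $\beta_p^k$ as the bracketed sum, which is exactly the formula in the statement once the dependence of $\alpha$ on $i$ is made explicit. Each $\beta_p^k$ is nonnegative. Moreover, $\beta_p^k L$ is finite: by Lemma~\ref{lamba-for-invalid-eb}, $\lambda_i\le|\VS|M/L$, so $\beta_p^k L\le|Q|\,|\VS|M\max_i\alpha_{p,i}^k$, which is independent of $L$. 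This confirms that the representation involves only finite quantities, consistent with the discussion after Lemma~\ref{lamba-for-invalid-eb}.

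The remaining step is to dispose of the residual $\sum_i\lambda_i\Lambda_i$. The entries of each $\Lambda_i$ are bounded independently of $L$ (they are combinations of $F$-values on valid states and of $b^{\o_0}$), while $\lambda_i\le|\VS|M/L$. Hence every entry of the residual is $O(|Q|\,|\VS|M\,\max_i\norm{\Lambda_i}_\infty/L)$, which tends to $0$ as $L\to\infty$. So the equality in Theorem~\ref{thm:inv-block-rep} holds in the same limiting sense ($L\approx\infty$) used throughout the section.

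I expect the main obstacle to be justifying that the bound on $\Lambda_i$ is truly uniform in $L$. The proof of Lemma~\ref{lemma:elem-eb-lin-comb-chi} folds the terms $\sum\alpha_p^k b^{\o_0}$ and $\sum\alpha_p^k b_p^k(\chi_p^{k+1}-\chi_p^k)$ into $\Lambda$, so I would check explicitly that $\alpha$, $b_p^k$ and $a_j$ depend only on set cardinalities and on valid $F$-values, never on $L$. Once that is verified, the factor $1/L$ from Lemma~\ref{lamba-for-invalid-eb} kills the residual, and the theorem follows.
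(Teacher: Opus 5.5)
Your proposal is correct and follows essentially the same route as the paper: apply Lemma~\ref{lemma:elem-eb-lin-comb-chi} to each invalid base, weight by $\lambda_i$, interchange the finite sums to define $\beta_p^k$, and discard $\sum_{b^{\o_i}\in Q}\lambda_i\Lambda_i$ using the $O(1/L)$ bound of Lemma~\ref{lamba-for-invalid-eb}. Your explicit indexing $\alpha_{p,i}^k$ and your remark that the equality holds only in the $L\to\infty$ sense, with each $\Lambda_i$ bounded uniformly in $L$, make precise what the paper's proof leaves implicit.
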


\begin{proof}

Consider the expansion of the term  $x_i = \sum_{b^{\o_i} \in Q} \lambda_i b^{\o_i}$ in Eq.(\ref{eq:base_split_valid_invalid}). Using Theorem (\ref{lemma:elem-eb-lin-comb-chi}) we get:
\begin{align*}
	x_i & = \sum_{b^{\o_i} \in Q} \sum_{p\in \partition} \sum_{k=1}^{|p|-1} 
 		\lambda_i \alpha_p^k  L(\chi_{p}^{k} - \chi_{p}^{k+1})  
 		+ \sum_{b^{\o_i} \in Q} \lambda_i \Lambda_i. \\
	\intertext{
		Recall from Lemma (\ref{lamba-for-invalid-eb}) that, for all $b^{\o_i} \in Q$, the coefficient $\lambda_i$ can be made arbitrarily small. Therefore, we can drop the term $\sum_{b^{\o_i} \in Q} \lambda_i \Lambda_i$ and rewrite the above equation as:
	}
 	x_i & = \sum_{p\in P} \sum_{k=1}^{|p|-1} \sum_{b^{\o_i} \in Q}  
 				\lambda_i \alpha_p^k  L(\chi_{p}^{k+1} - \chi_{p}^{k}). 
	\intertext{
		Replacing by $\beta^k_p = \sum_{b^{\o_i} \in Q}  \lambda_i \alpha_p^k$, we get:
	}
 	x_i & = \sum_{p\in P}\sum_{k=1}^{|p|-1} \beta_p^k L(\chi_{p}^{k+1} - \chi_{p}^{k}).
\end{align*}

\end{proof}
\noindent Note that the above result incorporates \ue{all} the invalid extreme bases, not merely the ones involved in the representation of base vector $x$ in any iteration of MNP. Using the result in Eq. (\ref{eq:base_split_valid_invalid}), we get: 
$ 
\norm{x}^2 = \norm{\sum_{b^{\o_j} \in R} \lambda_j b^{\o_j} + \sum_{ p \in \partition} \sum_{k=1}^{m-1} \beta_p^k L (\chi_p^{k+1} - \chi_p^{k})}^2.
$

\section{Main Algorithm}
\label{sec:proposed-method}

In this section we give the algorithm for minimizing the norm of the base vector corresponding to a simple submodular system. Where the pseudo-Boolean function is generated by using extension defined in definition \ref{definef}. For minimizing the sum of submodular functions defined on lattices, the proposed algorithm can be used in the inner loop of the BCD strategy as suggested in \cite{sosmnp}.

\setlength{\columnsep}{10pt}%
\setlength{\intextsep}{2pt}%
\begin{wrapfigure}{r}{0.4\linewidth}
	\begin{center}
		\includegraphics[width=1.0\linewidth]{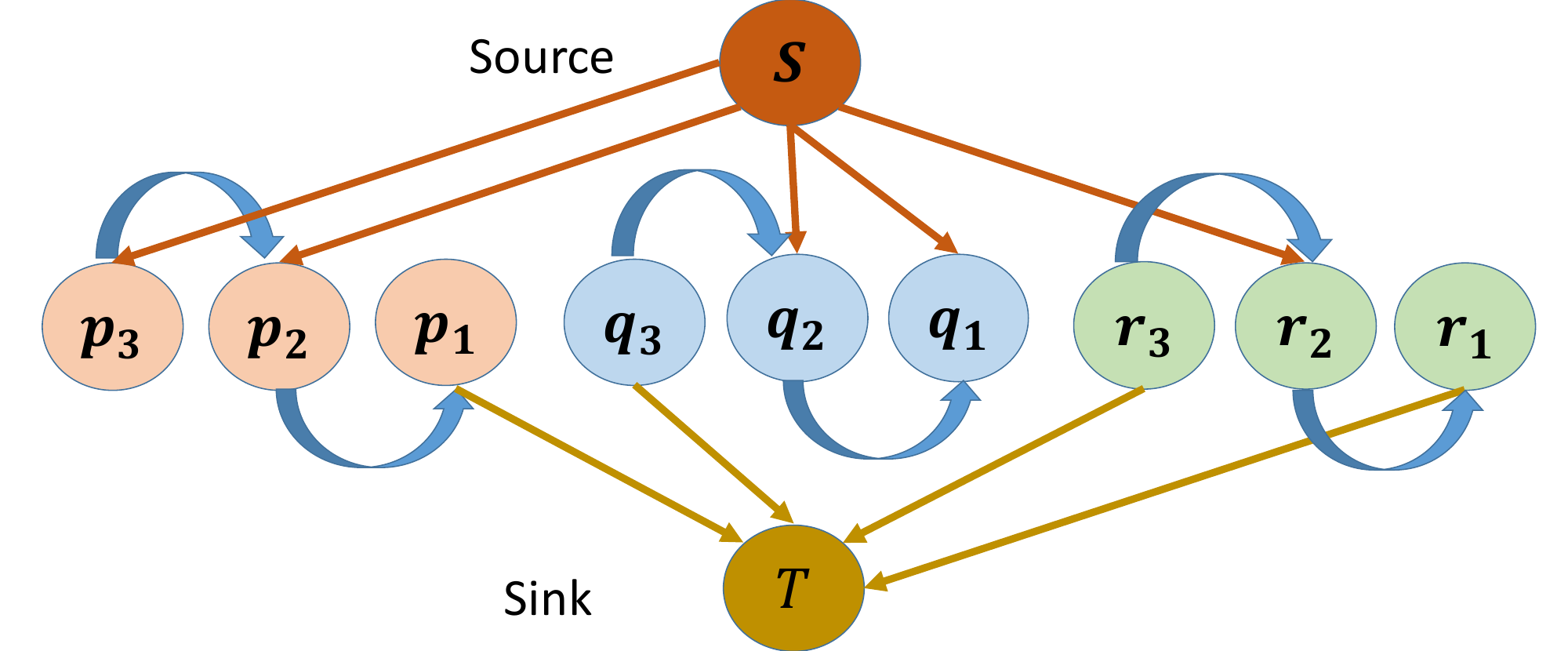}
	\end{center}
	\caption{Flow graph corresponding to the exchange operations for optimizing the block containing invalid extreme bases.}
	\label{Fig:FlowGraph}
\end{wrapfigure}

Theorem (\ref{thm:inv-block-rep}) opens up the possibility of minimizing $\norm{x}^2$ for a submodular function defined over a lattice using the BCD strategy. We will have two blocks. The first block, called the \emph{valid block}, is a convex combination of valid extreme bases $b^{\o_j}$, where standard MNP algorithm can be used to optimize the block. The other block, called the \emph{invalid block}, corresponds to the sum of the terms of type: $\beta_p^k L (\chi_p^{k+1} - \chi_p^{k})$, representing the invalid extreme bases. 
For minimizing the norm of the overall base vector using the invalid block, we hold the contribution from the valid block, $x_v$, constant \footnote{Recall that we start from a valid extreme base. Therefore, at initialization $x=x_v$}. Each vector $\beta_p^k L (\chi_p^{k+1} - \chi_p^{k})$ may be looked upon as capturing the $\beta_p^k$ increase/decrease due to the exchange operation between the two adjacent elements which define an elementary extreme base. This exchange operation can be viewed as flow of $\beta_p^k L$ from the element $p^{k}$ to $p^{k+1}$. We model the optimization problem for the invalid block using a flow graph whose nodes consists of $\{p^k \mid p \in \partition , 1 \leq k \leq |p|-1\} \cup \{s, t\}$. We add two type of edges:
\begin{itemize}[leftmargin=*]
	\item \textbf{Type 1:} If $x_v(p^k)$, corresponding to the valid block contribution, is $> 0$, then we add a directed edge from $s \ra p^k$, else we add the edge from $p^k \ra t$ with capacity $x_v(p^k)$. 
	\item \textbf{Type 2:} The directed edges $p^{k}$ to $p^{k+1}$, $1 \leq k \leq (|p|-1) $ with capacity $\card{\VS}M$ to ensure that the capacity is at least as large as $\beta_p^k L$: much larger than any permissible value of $x_v(p^k)$. Thus, any feasible flow augmentation in a path from from $s$ to $t$ can saturate only the first or the last edge in the augmenting path (i.e. the edge emanating from $s$ or the edge incident at $t$ in the path). 
\end{itemize}

Figure \ref{Fig:FlowGraph} is an example of a flow graph which has $3$ segments and each segment has $3$ elements. Since the starting state is $x_v$ the ``initial flow'' prior to pushing flow for flow maximization requires setting flow in a  type 1 edge incident at $p^k$ equal to the value of $x_v(p^k)$ and that in type 2 edges as $0$. This is because sum of flow on all edges incident at a node may be looked upon as the value of the corresponding element in the base vector \footnote{we refer the reader to  \cite{shanu2018inference} for details about the flow to base vector correspondence}. In effect initially there are non zero excesses on the non $s, t$ nodes in the flow graph defined as the sum of net in-flow on all edges incident at a node. The excess at node $p_k$ is denoted by $e(p_k)$. Max flow state can be looked upon as that resulting from repeatedly sending flow from a positive excess vertex to a negative excess vertex till that is no more possible. Values in the optimal base vector (optimal subject to the given $x_v$) at the end of this iteration will be the excesses at nodes when max flow state has been reached. 

\begin{algorithm}[t]
	\caption{Computing Min $\ell_2$ Norm from the Flow Output}
	\label{algo:phi}
	\begin{algorithmic}[1]
		\REQUIRE Vector $e$ the output of the MNP algorithm over valid block.
		\ENSURE The transformed vector $e$ with minimum $\ell_2$ norm.
		\FOR {$\forall p \in \partition $}
		\FOR {$i=1:|p|-1$} 
		\REPEAT
		\STATE Find maximum $k$, $i \le k \le |p|$,  such that \\ $e(p^i) > e(p^{i+1}) = e(p^{i+2}) \cdots = e(p^{k})$ or \\  $e(p^i) = e(p^{i+1}) = e(p^{i+2}) \cdots = e(p^{k-1}) > e(p^{k})$; \label{algo:step4}
		\STATE Set $e(p^i),e(p^{i-1}),\ldots,e(p^k)$ equal to $av_k$, \\
		where $av_k$ is the average of $e(p^i),e(p^{i-1}),\ldots,e(p^k)$; 
		\UNTIL {$e(p^{k}) \leq e(p^{k+1})$}
		\ENDFOR
		\ENDFOR
	\end{algorithmic}
	\label{algo:invalidblock}
\end{algorithm}

\subsection{Computing Min $\ell_2$ Norm By Flow}
\label{sec:invalidl2norm}
Since there is no edge between any two nodes corresponding to different segments max flow can be calculated independently for each segment. When max flow state is reached in the flow graph associated with a segment, a vertex which still has a negative excess will be to the left of vertices with positive excess (planar flow graph laid out as in Figure \ref{Fig:FlowGraph}) otherwise flow could be pushed from a positive excess vertex to a negative excess vertex. 

Note that the optimal base vector is not unique. Consider two adjacent vertices, $p^{k}$ and $p^{k+1}$, in the flow graph when the max flow state has been reached. If $e(p^{k})$ is larger than $e(p^{k+1})$ then increasing the flow in the edge from $p^{k}$ to $p^{k+1}$ by $\delta$ decreases $e(p^{k})$ by $\delta$ and increases $e(p^{k+1})$ by $\delta$. The result of this ``exchange operation'' is to create another optimal base vector but with a smaller $\ell_2$ norm. 

An optimal base vector with minimum $\ell_2$ norm will correspond to the max flow state in the flow graph in which $e(p^{k}) \leq e(p^{k+1})$  for all adjacent pairs of type 2 vertices. If this is not so then there would exist at least a pair $e(p^{k})$ and $e(p^{k+1})$ such that $e(p^{k}) > e(p^{k+1})$. Doing an exchange operation between $p^{k}$ and $p^{k+1}$ involving setting $e(p^{k+1})$ and $e(p^k)$ to the average of the old values will create a new optimal base vector with lower value of the $\ell_2$ norm. Algorithm \ref{algo:phi} gives an efficient procedure to transform the optimal base vector outputted by the max flow algorithm to one with minimum $\ell_2$ norm. Note that the proposed algorithm simply updates the base vector in one pass without any explicit flow pushing. In contrast, the corresponding algorithm for general flow graphs given in \cite{shanu2018inference} requires $O(n\log{n})$ additional max flow iterations over an $n$ vertex flow graph. 

\subsection{Overall Algorithm}
\label{sec:mainalgo}
Our algorithm builds upon \cite{fujishige2006minimum} MNP algorithm.
Recall that in a standard MNP algorithm iteration, given the current base vector  $x$, an extreme base $q$, that minimizes $x^{\intercal}q$ is added to the current set. Hence, steps to convergence of MNP is bounded by the number of extreme bases that may be added. In our case we will show that when we  start with a valid extreme base, the extreme base generated in the valid block after using the latest contribution from the invalid block, will come out to be a valid extreme base. 

\begin{lemma}
Algorithm \ref{algo:phi} returns a vector $x$ such that extreme base $q$ given as $q = \argmin_{q \in B(f)} x^T q$ is valid.
\label{lemma:invalidtovalid}
\end{lemma}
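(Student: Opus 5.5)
We would argue through Edmonds' greedy algorithm. For a fixed vector $x$, a minimizer of $x^\intercal q$ over $B(f)$ is the extreme base $b^\o$ whose ordering $\o$ sorts the elements of $\VS$ in non-decreasing order of $x$: the smallest entries of $x$ come first, so they receive the largest increments $f(k_\o)-f((k-1)_\o)$. The claim then reduces to showing that, for the vector returned by Algorithm \ref{algo:phi}, some sorting order of $x$ obeys the partial order $\oe_\DS$. By Definition \ref{def:validinvalid}, $q=b^\o$ is then valid.

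The first step is to isolate the structure Algorithm \ref{algo:phi} produces. By the repeat-until condition, the output satisfies $e(p^{k})\leq e(p^{k+1})$ for every segment $p\in\partition$ and all $1\le k<|p|$. So within each segment the entries are non-decreasing along the segment's internal order $p^1,\dots,p^{|p|}$, which is the order induced by $\oe_\DS$ on $p$ (the order used in the universal ordering $\o_0$ of Definition \ref{def:univ-ordering}). We need a short argument that the averaging loop does terminate with this property. This is a standard pool-adjacent-violators argument: each averaging step merges a violating block, and the merged blocks never re-violate earlier ones.

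The second step is to build the ordering. Sort all of $\VS$ by $x$. Where $x(p^k)<x(p^{l})$, the sorting order is forced and agrees with the segment order by the first step. Where there are ties, we break them using the universal ordering $\o_0$, which is valid. Elements of different segments lie in different connected components of $\GS(\DS)$, so $\oe_\DS$ relates no pair of them, and any interleaving of segments is allowed. Within a segment the chosen order therefore never puts $p^{l}$ before $p^{k}$ when $k<l$. Every prefix is then an order ideal (a union of prefixes $\{p^1,\dots,p^j\}$ of each segment), hence a valid state in $\DS$, and the resulting $\o$ is valid.

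The main obstacle is the gap between "$\oe_\DS$ restricted to a segment" and "the chain $p^1\prec\cdots\prec p^{|p|}$". If a segment's partial order is not a chain, monotonicity along one linear extension does not give monotonicity along every edge of $\GS(\DS)$. I would first try to use the paper's implicit modelling assumption, implicit in Lemma \ref{lemma:elem-eb-chi-rep} and Theorem \ref{thm:inv-block-rep}, that the invalid directions are exactly $\chi_p^{k+1}-\chi_p^k$, so each segment behaves as a chain. Failing that, I would show directly that the flow optimality already forces $x(u)\le x(v)$ for every arc $(v,u)$ of $\GS(\DS)$: otherwise one could push along the corresponding invalid direction and lower the norm, contradicting optimality of the invalid block. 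A secondary point is that ties must be broken toward validity to avoid selecting an invalid minimizer. This is harmless because the invalid minimizer has the same value of $x^\intercal q$, and we are free to choose the valid one.
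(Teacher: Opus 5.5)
Your argument is the paper's: Edmonds' greedy algorithm reduces $\argmin_{q\in B(f)}x^\intercal q$ to sorting $x$, Algorithm~\ref{algo:phi} makes $x$ non-decreasing along $p^1,\dots,p^{|p|}$ in every segment, and distinct segments are incomparable, so the sorted order obeys $\oe_\DS$. Your explicit tie-breaking by $\o_0$ is a real improvement. The averaging step \emph{creates} equal entries, so the paper's claim that sorting $x$ gives $p^i \o p^j$ for all $i<j$ holds only with such a tie-break. As you note, swapping tied entries gives an invalid extreme base with the same value of $x^\intercal q$.

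Two points in your write-up need fixing.

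\textbf{The averaging-loop invariant is false.} You claim that merged blocks never re-violate earlier ones. But averaging lowers the first entry of the merged block, so it can drop below its predecessor. With the left-to-right sweep of Algorithm~\ref{algo:phi} taken literally, $(2,3,0)$ becomes $(2,\tfrac32,\tfrac32)$. A correct pool-adjacent-violators scheme must also merge backwards. The cleaner justification is the exchange argument you mention only as a fallback, which is also the paper's argument in Section~\ref{sec:invalidl2norm}. Suppose $x(p^k)>x(p^{k+1})$. Adding $\delta(\chi_p^{k+1}-\chi_p^k)$ with small $\delta>0$ is always feasible, since $\beta_p^k$ may be increased. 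It changes $\norm{x}^2$ by $2\delta\,(x(p^{k+1})-x(p^k))+2\delta^2<0$. Hence any minimizer of the invalid block, which Algorithm~\ref{algo:phi} is meant to return, is non-decreasing along each segment.

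\textbf{Your ``main obstacle'' is not an obstacle for this lemma.} Because $\o_0$ is valid, $p^1,\dots,p^{|p|}$ is a linear extension of $\oe_\DS$ restricted to $p$. Non-decreasing values along a linear extension give $x(u)\le x(v)$ for every pair in which $u$ must precede $v$. Your second step already uses exactly this when it says every prefix is a union of prefixes $\{p^1,\dots,p^j\}$. Whether segments are chains matters only for whether the directions $\chi_p^{k+1}-\chi_p^k$ describe the invalid block correctly, not for validity of the greedy ordering.
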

\begin{proof}
It is easy to show that when Algorithm \ref{algo:phi} terminates, for any pair of indices $i, j$ corresponding to any segment $p \in \partition$ if $i < j$ then $e(p^i) \leq e(p^j)$. Note that by construction the excess vector $x$ is the base vector. This implies that the order $\o$ of the indices obtained by sorting the elements of $x$ will satisfy  $p^i ~ \o ~ p^j, \forall i<j$, and $ \forall p \in \partition$. This is the condition that has to be satisfied for an ordering to be valid (Cf. Def. \ref{def:validinvalid}). Recall that in the MNP algorithm the extreme base is found by computing the ordering of sorted elements of $x$. Hence, the extreme base $q = \argmin_{q \in B(f)} x^T q$ will be a valid one.
\end{proof}

\begin{algorithm}[t]
\caption{Main Algorithm}
\label{algo:main}
\begin{algorithmic}[1]
\REQUIRE Submodular function: $f$
\REQUIRE Modular vector $a$
\REQUIRE Partial order $\oe_{\DS}$
\ENSURE Base vector $x^{*} \in B(f + a)$ minimizing $\norm{x}^2$

\LBLANKCOMMENT
\WHILE{(TRUE)}
\STATE Put $f = f + a$
\STATE S = \{\}
\STATE Find extreme base $\hat{q} := \argmin\limits_{q \in B_{f}} \dotprod{x}{q}$ using Edmond's algorithm.
  \IF{ Extreme base $\hat{q}$ is invalid according to the partial order $\oe_{\DS}$. (Definition \ref{def:validinvalid})}
    \label{algo2:step5}
    \STATE $x$ = Compute Invalid Contribution($x$) \ref{algo:phi};
    \label{algo2:step6}
    \STATE \textbf{continue};
  \ENDIF
  \IF{ ($\norm{x}^2 \leq \dotprod{x}{\hat{q}} + \epsilon$) } \label{epsilon-step}
    \STATE break;
  \ENDIF
  \STATE $S := S \cup \hat{q}$;
  \STATE Find $x$ in affine hull of $S$;
  \STATE If $x$ is not in convex hull $S$, translate to nearest point in convex hull and update $S$;
\ENDWHILE
\end{algorithmic}
\end{algorithm}

This implies that the number of iterations involving invalid blocks can not exceed the number of valid extreme bases added as in the standard MNP algorithm. This ensures convergence of the algorithm. The formal convergence proof for the algorithm is given in the later Section.

The correctness of our algorithm follows from the fact that the optimization for valid blocks proceeds in the standard way, and results in a new extreme base given the current base vector. The correctness of the optimization step of the invalid block,  which finds a minimum norm base vector given a valid block, has already been explained in the previous section.

An implementation of our main algorithm is present in Algorithm \ref{algo:main}. Note that we optimize function $f+a$, where $a$ is a modular vector. It gives algorithm flexibility to be plugged directly into our sum of submodular function minimization Algorithm. Input to algorithm is a submodular function and a modular vector. We can compute the extreme base for translated function using lemma \ref{lemma:extremetranslation}. All other steps are very similar to Min Norm Point Algorithm \cite{fujishige06} except step \ref{algo2:step5} and step \ref{algo2:step6}. In these steps we check if new extreme base computed is invalid or not. If it comes out to be invalid then we call algorithm \ref{algo:invalidblock} to make vector $x$ consistent. So that it outputs vector $x$ which will produce a valid extreme base. If the new extreme point comes out to be valid then later steps follow like Min Norm Point Algorithm.

\subsection{Sum of Submodular Minimization}

In this section we extend our algorithm for minimizing sum of submodular functions defined over different lattices. The proposed algorithm is quite similar to the algorithm in \cite{sosmnp} in its over all structure. Just like \cite{sosmnp}, we also create blocks corresponding to each submodular function, and optimize each block independently (taking the contribution of other blocks as suggested in \cite{sosmnp}) in an overall block coordinate descent strategy. The only difference between SoSMNP and our algorithm is the way we optimize one block. While SoSMNP uses standard MNP, we optimize using a special technique, as outlined in previous section, with (sub)blocks of valid and invalid extreme bases, within each block/clique. Hence, the convergence and correctness of overall algorithm follows from block coordinate descent similar to \cite{sosmnp}. What we need to show is that for a single block, the algorithmic strategy of alternating between valid and invalid blocks converges to the optimal for that block.

We give the proposed sum of submodular minimization method in Algorithm \ref{algo:mlhybrid}. The algorithm takes submodular $f_\cs$'s and computes minimum $\ell_2$ norm of $x \in B(f)$ s.t. $f = \sum_{\cs \in \CS} f_\cs$. The overall algorithm solves valid block with the SoS-MNP algorithm given in \cite{sosmnp} and uses Algorithm \ref{algo:phi} to solve invalid block. Let $x_\cs$ be the restriction of $x$ over clique $\cs$, the norm $\norm{x}^2$ is optimized by computing minimum norm $\norm{x_\cs}^2$ over each clique cyclically. Algorithm \ref{algo:mlhybrid-over-a-clique} minimizes $\norm{x_\cs}^2$ in a very similar way as MNP Algorithm \cite{sosmnp} described in Background section. The only difference lies in handling the invalid extreme base at step \ref{algo3:step5} of Algorithm \ref{algo:mlhybrid-over-a-clique}.

\begin{algorithm}[t]
\caption{Algorithm for minimizing a sum of  submodular functions defined over lattices}
\label{algo:mlhybrid}
\begin{algorithmic}[1]
\REQUIRE $\{f_\cs\}$ such that $f=\sum f_\cs$.
\ENSURE $x = \argmin \norm{x}^2$ subject to $x \in B(f)$.
\LCOMMENT{Initialize}
\FORALL{($\cs \in \CS$)}
  \STATE Compute partial order $\oe_{\DS^\cs}$
  \STATE $q_\cs$ $\leftarrow$ Take any extreme base of $f_\cs$;
  \STATE $S_\cs := \{ q_\cs \}$;
  \STATE $y_\cs := q_\cs$;
\ENDFOR
\STATE $x := \sum_\cs y_\cs$;
\LCOMMENT{Perform Block Coordinate Descent with blocks specified by blocks}
\WHILE{($\norm{x}$ decreases by more than $\delta$)}
  \FORALL{($\cs \in \CS$)}
    \STATE $a_\cs = x_\cs - y _\cs$
    \STATE $x_\cs$ = Main Algorithm($f_\cs$,$a_\cs$, $\oe_{\DS^\cs}$);
  \ENDFOR
\ENDWHILE
\end{algorithmic}
\end{algorithm}

\begin{algorithm}[t]
\caption{MLHybridOverAClique}
\label{algo:mlhybrid-over-a-clique}
\begin{algorithmic}[1]
\REQUIRE Clique function: $f_\cs$
\REQUIRE Set of valid extreme bases selected in last iteration: $S_\cs$
\REQUIRE Restriction of current solution vector $x$ on $\cs$: $x_\cs$
\REQUIRE Current clique vector: $y_\cs$
\ENSURE Clique vector $y_\cs^{*} \in B(f_c)$ minimizing $\norm{x_\cs}^2$
\ENSURE Updated set $S_\cs^{*}$ of valid extreme bases
\LBLANKCOMMENT
\WHILE{(TRUE)}
  \STATE Find new translation $a_\cs := x_\cs - y_\cs$;
  \STATE Find extreme base $\hat{q}_\cs := \argmin\limits_{q_\cs \in B_{f_\cs}} \dotprod{x_\cs}{q_\cs}$ using Edmond's algorithm.
  \IF{ Extreme base $\hat{q}_\cs$ is invalid according to Definition \ref{def:validinvalid}}
    \label{algo3:step5}
    \STATE ComputeInvalidContribution($x_\cs$);
    \STATE \textbf{continue};
  \ENDIF
  \STATE Find translated extreme base $\hat{p}_\cs = \hat{q}_\cs + a_\cs$;
  \IF{ ($\norm{x_\cs}^2 \leq \dotprod{x_\cs}{\hat{p}} + \epsilon$) } \label{epsilon-step}
    \STATE break;
  \ENDIF
  \STATE $S_\cs := S_\cs \cup \hat{q}_\cs$;
  \STATE $P_\cs=\{\hat{q_\cs} + a_\cs | q_\cs \in S_\cs\}$;
  \STATE Find $x_\cs$ in affine hull of $P_\cs$;
  \STATE If $x_\cs$ is not in convex hull $P_\cs$, translate to nearest point in convex hull and update $S_\cs$;
\ENDWHILE
\end{algorithmic}
\end{algorithm}

\subsection{Convergence of SoSMNP \cite{sosmnp}}
\label{sosConvproof}

Our focus initially is to show the convergence to the optimal solution by the MNP algorithm running in the block co-ordinate descent mode as in \cite{sosmnp}. The problem formally is to minimize the function $f(S) = \sum_{\cs \in \CS} f_\cs(S \cap \cs) \quad S \subseteq \VS$, where $f_\cs: 2^{|\cs|} \rightarrow \mathcal{R}$ is a  submodular function. It has been shown in \cite{sosmnp} that $f$ can be minimized by finding a point $x \in B(f)$ with the minimum \ml2-norm $\norm{x}^2$. We write $x$ as the sum $x= \sum_{\cs \in \CS} y_\cs$ where $y_\cs \in B(f_\cs)$. 

We assume that a block corresponds to a  clique in $\CS$. Let $x_\cs$ be the restriction of $x$ to the elements in $\cs \in \CS$,  and let $x_{\not \cs}$ be the restriction of $x$ on the remaining elements. We can write $\norm{x}^2 = \norm{x_\cs}^2 + \norm{x_{\not \cs}}^2$.  The block co-ordinate descent algorithm in \cite{sosmnp} minimizes $\norm{x_\cs}^2$ using MNP over all the cliques $\cs \in \CS$ cyclically. This norm minimization step can be viewed as MNP minimizing $f'_\cs(S) = f_\cs(S) + a_\cs(S), \forall S \subseteq \cs$ where $a_\cs = x_c-y_c$, is a denoting the contribution of the other cliques which remains constant while running MNP over this clique/block. Note that $a_\cs(S) = \sum_{e \in S} a_\cs(e)$, and we can equivalently treat $a_\cs$ as a modular function as well. Let $f'_\cs(S) = f_\cs(S) + a_\cs(S), \forall S \subseteq \cs$. Note that the $f'$ as shown above is a sum of submodular ($f$), and a modular function ($a_\cs$). Therefore, $f'$ is submodular. It is easy to show the following result:
\begin{lemma}
Let $q_\cs$ be a extreme base vector in  $B(f_\cs)$ corresponding to an ordering $\o_\cs$. Then the vector $q_\cs + a_\cs$ is an extreme base of $B(f'_\cs)$ corresponding to the same ordering $\o_\cs$.  
\label{lemma:extremetranslation}
\end{lemma}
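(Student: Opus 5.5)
The plan is to apply Edmond's greedy algorithm directly to $f'_\cs = f_\cs + a_\cs$ and use the additivity of the modular term. Fix the ordering $\o_\cs : v_1 \o \cdots \o v_m$ of the elements of $\cs$, with prefix sets $k_\o = \{v_1,\ldots,v_k\}$ and $0_\o = \emptyset$. By the greedy procedure, the extreme base of $B(f'_\cs)$ for this ordering has entries
\begin{align*}
b'(v_k) = f'_\cs(k_\o) - f'_\cs((k-1)_\o) = \big(f_\cs(k_\o) - f_\cs((k-1)_\o)\big) + \big(a_\cs(k_\o) - a_\cs((k-1)_\o)\big).
\end{align*}

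The first bracket is $q_\cs(v_k)$ by definition of $q_\cs$ as the extreme base for $\o_\cs$. Since $a_\cs(S) = \sum_{e \in S} a_\cs(e)$, the second bracket telescopes to $a_\cs(v_k)$. So $b' = q_\cs + a_\cs$ entrywise, which means $q_\cs + a_\cs$ is the greedy vector of $f'_\cs$ for $\o_\cs$.

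It remains to confirm that this greedy vector is an extreme base of $B(f'_\cs)$. Since $f'_\cs$ is submodular (a sum of a submodular and a modular function, as noted just before the statement), Edmond's theorem applies. Alternatively, one can see it directly: $B(f'_\cs) = B(f_\cs) + a_\cs$, because $x(U) \le f_\cs(U)$ holds iff $(x + a_\cs)(U) \le f'_\cs(U)$, and equality on $\cs$ likewise transfers. Translation is an affine bijection, so it maps extreme points to extreme points.

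The normalization $f'_\cs(\emptyset) = 0$ holds automatically, since $a_\cs(\emptyset) = 0$. Nothing here is a real obstacle; the only point needing care is the telescoping of the modular part.
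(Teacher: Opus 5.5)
Your proof is correct and follows the paper's own argument. Both compute the greedy vector of $f'_\cs$ for the ordering $\o_\cs$ and observe that the modular term contributes exactly $a_\cs(e)$ in each coordinate. Your extra observation that $B(f'_\cs) = B(f_\cs) + a_\cs$ makes the extremality explicit, which the paper leaves implicit.
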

\begin{proof}
We can calculate the elements in extreme base vector ($q'_\cs \in B(f')$)  corresponding to ordering $\o_\cs$ by Edmond's Greedy Algorithm,
\begin{align*}
	q'(e) &= f'(S_e \cup e) - f'(S_e),    
	\tag{$S_e$ is the set of elements before $e\in \cs$ in $\o_\cs$.} \\
	& = f(S_e \cup e) + a_\cs(S_e \cup e)  - (f(S_e) + a_\cs(S_e)), \\
	& = f(S_e \cup e) + a_\cs(S_e) + a_\cs(e) - (f(S_e) + a_\cs(S_e)),  
	\tag{$a_\cs$ can be seen as a modular function.} \\
	& = f(S_e \cup e) - f(S_e) + a_\cs(e), \\
	& = q_\cs(e) + a_\cs(e).      \tag{By Edmond's Greedy Algorithm.} 
\end{align*}
Hence, $q'_\cs = q_\cs + a_\cs$.
\end{proof}
\noindent It is easy to see that:
\begin{align*}
x_\cs & = y_\cs + a_\cs = \sum_i \lambda_i q_\cs + a_\cs 
\tag{where $\sum_i \lambda_i=1$, and $\lambda_i$ > 0} \\
& = \sum_i \lambda_i q_\cs + \sum_i \lambda_i a_\cs 
\tag{Since $\sum_i \lambda_i=1$} \\
& = \sum_i \lambda_i (q_\cs + a_\cs) =  \sum_i \lambda_i q'_\cs  
\end{align*}
\noindent Hence, $x_\cs$ is a base vector of $f'$. Therefore, minimizing the minimum norm over a block, the way SoSMNP does it, can be seen as minimizing the norm of $x_\cs$: the restriction of $x$ over the elements of clique $\cs$ (and not $y_\cs$).
Let us suppose, we have reached a situation where the SoSMNP performs minimization over all blocks (cliques), and no change was observed in any of the blocks. The following lemma establishes the relationship between the extreme base of $f_\cs$, and the one corresponding to $f$.
\begin{lemma}
Let $q_\cs = \argmin_{q \in B(f_\cs)} x_\cs^T q, \; \forall \cs \in \CS$. Then $b = \sum_{\cs \in \CS} q_\cs$ also satisfies $b = \argmin_{b \in B(f)} x^Tb$.
\label{lemma:ebrelation}
\end{lemma}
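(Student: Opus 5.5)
Since $\sum_{\cs}q_\cs$ is built from per-clique minimizers, the plan is to show that $b=\sum_{\cs\in\CS} q_\cs$ lies in $B(f)$ and that $x^Tb$ is a lower bound for $x^Tb'$ over all of $B(f)$. Here each $q_\cs$ is understood as a vector on $\VS$ by zero-padding outside $\cs$.

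\emph{Step 1: feasibility.} Each $q_\cs$ lies in $B(f_\cs)$. Lemma \ref{lem:sum_vec} applied to the family $\{q_\cs\}$ then gives directly that $b(S)=\sum_\cs q_\cs(\cs\cap S)$ defines a vector of $B(f)$.

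\emph{Step 2: optimality.} Take an arbitrary $b'\in B(f)$. By Lemma \ref{lem:sum_vec_converse} it decomposes as $b'=\sum_\cs y'_\cs$ with $y'_\cs\in B(f_\cs)$, again zero-padded. The key observation is that a zero-padded vector supported on $\cs$ pairs with $x$ exactly as it pairs with $x_\cs$:
\[
x^T y'_\cs = x_\cs^T y'_\cs .
\]
Hence
\[
x^Tb'=\sum_\cs x_\cs^T y'_\cs \;\geq\; \sum_\cs x_\cs^T q_\cs = x^T b,
\]
where the inequality holds termwise by the defining property $q_\cs=\argmin_{q\in B(f_\cs)}x_\cs^Tq$. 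Since $b$ is feasible by Step 1, it attains the minimum, so $b=\argmin_{b\in B(f)}x^Tb$ (interpreting $\argmin$ as ``a minimizer'').

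\emph{Step 3: extreme-base form (optional).} If one also wants $b$ to be an extreme base given by a single ordering, this needs more care. Edmond's greedy algorithm applied to the ordering of $\VS$ obtained by sorting $x$ produces, clique by clique, orderings consistent with sorting $x_\cs$. Using this, $f(S)=\sum_\cs f_\cs(S\cap\cs)$ telescopes into a sum of the clique greedy increments, so the greedy extreme base of $f$ equals $\sum_\cs q_\cs$, provided each $q_\cs$ is chosen with ties broken consistently with a common global order.

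The main obstacle is this tie-breaking issue in Step 3, together with the fact that minimizers are not unique. Step 2 avoids both problems because it only shows that $b$ achieves the optimal value and never identifies $b$ as the unique minimizer. So I would state the lemma in the ``attains the minimum'' sense and treat Step 3 as a remark.
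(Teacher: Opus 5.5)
Your proof is correct, but your main argument is not the route the paper takes.

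\textbf{The paper's route.} The paper's proof is essentially your optional Step 3. It sorts $x$ to obtain a global ordering $\o_f$. By Edmonds' greedy algorithm, $b^{\o_f}$ minimizes $x^Tb$ over $B(f)$. It then telescopes $f(S)=\sum_\cs f_\cs(S\cap\cs)$ to show $b^{\o_f}(e)=\sum_\cs q_\cs^{\o_\cs}(e)$, where $\o_\cs$ is the restriction of $\o_f$ to $\cs$. So the sum of the clique greedy bases \emph{is} the global greedy base.

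\textbf{Your route.} Your Step 2 is a value (duality) argument instead. Feasibility comes from Lemma~\ref{lem:sum_vec}. The lower bound comes from the decomposition in Lemma~\ref{lem:sum_vec_converse} together with the support identity $x^Ty'_\cs=x_\cs^Ty'_\cs$.

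\textbf{What your route buys.} It is more general, and it is rigorous exactly where the paper is not.
\begin{itemize}
\item The hypothesis allows \emph{any} minimizer $q_\cs$. When $x_\cs$ has ties, such minimizers need not be greedy bases, let alone greedy for orders induced by one common global order.
\item The paper's ``it is easy to see that the ordering over $x$ and $x_\cs$ will be consistent'' silently assumes consistent tie-breaking. Your argument needs none of that.
\item You only show that $b$ attains the optimal value, and that value is all Lemma~\ref{lemma:convergence} later uses ($x^Tx=x^Tb$).
\end{itemize}

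\textbf{What the paper's route buys.}
\begin{itemize}
\item It identifies $b$ constructively as an extreme base of $B(f)$ for a single ordering, which is the object MNP actually adds.
\item It is self-contained modulo greedy optimality. The usual proof of the Minkowski decomposition behind Lemma~\ref{lem:sum_vec_converse} itself goes through the same greedy-sum identity. Your proof therefore moves the combinatorial content into that cited lemma rather than eliminating it.
\end{itemize}
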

\begin{proof}
In the SoSMNP algorithm, the extreme base $q_\cs$ is generated using Edmond's Greedy Algorithm \cite{schrijver00} on the order $\o_\cs$ of the indices obtained by sorting the elements of $x_\cs$ in the increasing order. We represent the extreme base so obtained by $q_\cs^{\o_\cs}$. The SoSMNP algorithm for a block terminates when $x_\cs^T x_\cs = x_\cs^T (q_\cs^{\o_\cs} + a_\cs)$

Consider the termination situation of SoSMNP for the overall problem (comprising of all the cliques). In such a case the algorithms tries to minimize for all the blocks/cliques and no change is found on any of the cliques. Therefore, termination condition of each block is met, and $q_\cs^{\o_\cs} = \argmin_{q \in B(f_\cs)} x_\cs^T q$. 

Let $\o_f$ be the ordering of elements of $x$ in the increasing order. It is easy to see that the ordering over $x$ and $x_\cs$ will be consistent with each other, in the sense that $x(e_1) ~ \o_f ~ x(e_2) \Rightarrow x_\cs(e_1) ~ \o_\cs ~ x_\cs(e_2)$. 

Let us create an extreme base of $f$, corresponding to the ordering $\o_f$, and denote as $b^{\o_f}$. Since $\o_f$ denotes the ordering over elements of $x$, therefore, from Edmond's algorithm, we have: $b^{\o_f} = \argmin_{b \in B(f)} x^Tb$. Further, we also have:
\begin{align*}
	b^{\o_f}(e) &= f(S_e \cup e)- f(S_e),     
	\tag{As per Edmond's algorithm. $S_e$ is the set of elements before $e$ in $\o_f$} \\
	& = \sum_{\cs \in \CS} f_{\cs}(S_e \cup e \cap \cs)- f(S_e \cap \cs),  
	\tag{Since $f(S) = \sum_{\cs \in \CS} f_\cs(S \cap \cs)$ } \\
	& = \sum_{\cs \in \CS} q_\cs^{\o_\cs}(e \cap \cs)    
	\tag{Since $\o_c$ is the restriction of $\o$}.
	\intertext{
		Since above holds for all the elements $e \in \VS$, therefore:
	}
	b^{\o_f} & = \sum_{\cs \in \CS} q_\cs^{\o_\cs}. 
\end{align*}
Hence, we have proved both the properties of $b^{\o_f}$
\end{proof}
We can now give the convergence proof of the SoSMNP with the following lemma:
\begin{lemma}
	If in a complete cycle of SoSMNP over all the cliques, we can not improve the norm $x_\cs$ for any $\cs$, then we have $x \in B(f)$ such that $\norm{x}^2 = x^T x = x^T b$, where $b = \argmin_{b \in B(f)} x^Tb$. 
	\label{lemma:convergence}
\end{lemma}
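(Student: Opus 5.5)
The argument goes block by block, using the decomposition $\norm{x}^2 = \norm{x_\cs}^2 + \norm{x_{\not\cs}}^2$ and the fact that each block step is an MNP run on the translated function $f'_\cs = f_\cs + a_\cs$. Suppose a full cycle leaves every $x_\cs$ unchanged. Then for each clique the MNP run on $f'_\cs$ must have stopped at its first optimality check. That is, with $q_\cs = \argmin_{q \in B(f_\cs)} x_\cs^T q$, the termination test holds:
\[ \norm{x_\cs}^2 = x_\cs^T (q_\cs + a_\cs). \]
Here we use Lemma \ref{lemma:extremetranslation}, which says $q_\cs + a_\cs$ is the extreme base of $B(f'_\cs)$ minimizing $x_\cs^T(\cdot)$, since it comes from the same ordering. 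Strictly, $x_\cs \in B(f'_\cs)$ always gives $x_\cs^T(q_\cs+a_\cs) \le \norm{x_\cs}^2$. If this inequality were strict, Wolfe's step would strictly decrease the norm, contradicting the no-improvement hypothesis. So equality holds for every clique.

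Next, sum over cliques. Writing $x = \sum_\cs y_\cs$ and $a_\cs = x_\cs - y_\cs$, the equalities give
\[ \sum_\cs x_\cs^T y_\cs = \sum_\cs x_\cs^T q_\cs , \]
because the $a_\cs$ terms cancel: $\norm{x_\cs}^2 - x_\cs^T a_\cs = x_\cs^T y_\cs$. The left side is $x^T \sum_\cs y_\cs = x^T x$, since $y_\cs$ is supported on $\cs$. The right side is $x^T \sum_\cs q_\cs$. By Lemma \ref{lemma:ebrelation}, $b = \sum_\cs q_\cs$ is exactly $\argmin_{b \in B(f)} x^T b$. Also, $x \in B(f)$ by Lemma \ref{lem:sum_vec}. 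Together these give $\norm{x}^2 = x^T b$ with $b$ the minimizing extreme base, which is the statement.

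The main obstacle is justifying the per-clique equality from ``no change observed''. One concern is that MNP on a block might stall without the termination test holding. This is ruled out by the standard Wolfe property: whenever $x_\cs^T \hat p < \norm{x_\cs}^2$, adding $\hat p$ yields a point of strictly smaller norm in the convex hull. In the $\epsilon$-tolerant version this only gives the equality up to $\epsilon$ per clique, so the conclusion holds up to $|\CS|\epsilon$.

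A second concern is that Lemma \ref{lemma:ebrelation} needs the orderings $\o_\cs$ to be restrictions of a common ordering $\o_f$ obtained by sorting $x$. I would check this by breaking ties in all cliques consistently with a single global tie-break on $\VS$. The $\o_\cs$ are then literally restrictions of $\o_f$, and the decomposition $b^{\o_f} = \sum_\cs q_\cs^{\o_\cs}$ holds.
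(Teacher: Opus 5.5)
Your proposal is correct and follows essentially the same route as the paper. Both arguments use the per-clique MNP termination equality $\norm{x_\cs}^2 = x_\cs^T(q_\cs + a_\cs)$, cancel the $a_\cs$ terms after summing over cliques, use the support argument to turn $x_\cs^T y_\cs$ and $x_\cs^T q_\cs$ into $x^T y_\cs$ and $x^T q_\cs$, and invoke Lemma~\ref{lemma:ebrelation} to identify $\sum_\cs q_\cs$ as the minimizing extreme base. Your two extra justifications (that ``no improvement'' forces the termination equality via Wolfe's strict-decrease property, and that consistent tie-breaking makes each clique ordering a restriction of one global ordering) spell out points the paper's argument leaves implicit.
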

\begin{proof}
Recall that for a clique $\cs$, SoSMNP can be seen as minimizing the norm of $x_\cs$ which is a base vector of $f'_\cs = f_\cs + a_\cs$. Further $q'_\cs = q_\cs + a_\cs$ is an extreme base of $f'$. Therefore, from the termination of basic MNP algorithm, the following must hold:
\begin{align*}
    x_\cs^T x_\cs & = x_\cs^T (q_\cs + a_\cs). 
    \tag{$q_\cs = \argmin_{q\in B(f_\cs)} x_\cs^T q$} \\
    \intertext{
    	Summing over all the cliques we get
    }
    \sum_{\cs \in \CS} x_\cs^T x_\cs & = \sum_{\cs \in \CS} x_\cs^T (q_\cs + a_\cs), \\
    \sum_{\cs \in \CS} x_\cs^T (y_\cs + a_\cs) & = \sum_{\cs \in \CS} x_\cs^T (q_\cs + a_\cs), \\
    \sum_{\cs \in \CS} x_\cs^T y_\cs & = \sum_{\cs \in \CS} x_\cs^T q_\cs. 
    \tag{$\sum_{\cs \in \CS} x_\cs^Ta_\cs$ cancels out} \\
    \intertext{
    	Since vector $y_\cs$ and $q_\cs$ have non-zero values only for elements in $\cs$. Therefore we can write $x_\cs^T y_\cs= x^T y_\cs$ and $x_\cs^T q_\cs= x^T q_\cs$. Substituting the values, we get:
    }
    \sum_{\cs \in \CS} x^T y_\cs & = \sum_{\cs \in \CS} x^T q_\cs, \\
    x^T\sum_{\cs \in \CS}y_\cs & = x^T\sum_{\cs \in \CS} q_\cs \\
    x^T x & = x^T b  \tag{where $b= \argmin_{b \in B(f)} x^Tb$, by Lemma \ref{lemma:ebrelation}}
\end{align*}
The equation above is the termination condition of basic MNP when run over the overall function $f$ \cite{chakrabarty2014provable}. Therefore, the lemma essentially proves that the basic MNP terminating with optimal solutions for all cliques/blocks implies that the the base vector obtained by summing up the base vectors of all the cliques/blocks is the optimal solution for the overall objective function.
\end{proof}

When MNP algorithm is run in the block co-ordinate descent mode it is easy to show that any decrease in the $\norm{x_c}^2$ of a clique decreases the over all $\norm{x}^2$ by the same amount because $\norm{x_{\not \cs}}$ is untouched when optimizing for $\cs$. Since at each cycle there is at least one clique for which $\norm{x_c}^2$ decreases, we can say that $\norm{x}^2$ decreases monotonically at each cycle. Note that Theorem 4 of \cite{chakrabarty2014provable} gives us a lower bound on the improvement in every MNP iteration. It follows that MNP algorithm running in block co-ordinate descent mode will have a provable rate of convergence. 

For the sake of completeness we will also like to point out that the optimal solution obtained when MNP is run globally also corresponds to the individual blocks having reached their local optima. 

\section{Convergence of ML-hybrid Algorithm}

Note that in SoSMNP each block is optimized using the MNP algorithm. In MLhybrid, on the other hand, each block is further subdivided. One corresponds to the set of valid extreme bases (the valid block) and the other to the set of invalid extreme bases (the invalid block) whose convex combination  defines the base vector $x_\cs$. MNP is run on the valid block. If at any iteration MNP \cite{fujishige2006minimum} inserts an invalid extreme base, the  flow based Algorithm \ref{algo:invalidblock} is run on the invalid block. We show below that when MNP is run on the valid block now (that is just after a run of the flow based algorithm on the invalid block) the extreme base generated will be valid.

Lemma \ref{lemma:invalidtovalid} implies that an iteration on the invalid block will be followed by the MNP algorithm making progress in the form of generation of a valid extreme base. Also note that the $\ell_2$ norm decreases when the flow based algorithm is run on the invalid block. Therefore, termination and convergence of the MLhybrid algorithm running on a clique/block follows along the same lines as that for the standard MNP algorithm \cite{chakrabarty2014provable}. 

Now we show that termination over a clique/block results in $x_\cs$ using which minimizer obtained comes on a valid state.

Note that generation of an invalid extreme base can always be followed by generation of a valid extreme base (by running the flow based algorithm on the invalid block). Therefore, at termination it is guaranteed that the order $\o_\cs$ of the indices obtained by sorting the elements of $x_\cs$ is valid. That is the optimal solution corresponds to a valid primal state. Hence, it follows, using Lemma \ref{lemma:convergence}, that the MLHybrid algorithm run in the block coordinate descent manner converges to the optimal.

	\section{Experiments}

The proposed framework generalizes the inference algorithm of Shanu et al.~\cite{shanu2020inference}, which was developed for multi-label MRF-MAP inference with cliques of size up to $100$. In that prior work, the algorithm was applied to two computer vision problems: pixel-wise object segmentation on noisy Pascal VOC images~\cite{pascal-voc-2012} and stereo correspondence on $200 \times 200$ Middlebury images~\cite{scharstein2002taxonomy}. Cliques were generated with SLIC~\cite{achanta2012slic} using both decomposable (sum of absolute label differences) and non-decomposable (concave-of-cardinality~\cite{zhang2015higher}) submodular potentials over small ($60$--$80$ pixels) and large ($300$--$400$ pixels) overlapping cliques. For segmentation, pixel likelihoods were obtained from Deeplabv3+~\cite{deeplabv3plus2018}; the hybrid inference improved mean IoU from $0.544$ to $0.566$ and $0.579$ for small and large cliques, respectively, with per-image IoU near $0.9$ when hyperparameters were tuned. For stereo matching, the method was compared against MPI, TRWS, MPLP, and $\alpha$-expansion~\cite{koller2009probabilistic,kolmogorov2006convergent,globerson2008fixing,boykov2001fast,gould2012darwin} on instances where competing methods could not handle pairwise potentials within cliques of size $50$ or larger. The algorithm also converged substantially faster than SOS-MNP~\cite{sosmnp} on the same stereo instances. Full experimental details, including visual comparisons, runtime analyses, and convergence plots, are provided in~\cite{shanu2020inference}.

We next assess the scalability of the proposed lattice-based algorithm as the number of labels and the number of pixels in the $K$-submodular function increase. Figure~\ref{fig:compsubgrad} compares our method with the subgradient method~\cite{boyd2003subgradient} under non-decomposable potentials. For the subgradient baseline, the submodular function is extended beyond the lattice domain using Schrijver's algorithm~\cite{schrijver2003combinatorial}. The left panel reports running time as a function of the number of pixels while holding the label count fixed at $4$; the right panel reports running time as a function of the number of labels while holding the number of pixels fixed at $10$. Both panels use a logarithmic vertical axis, which makes clear that the subgradient method is several orders of magnitude slower than our lattice-based approach.

We next compare our method with the minimum-norm-point (MNP) algorithm~\cite{fujishige06} on larger synthetic instances representative of practical problem sizes. Figure~\ref{fig:complml} summarizes the running time of both methods as problem size increases. The left panel varies the number of pixels while holding the label count fixed at $30$; the right panel varies the number of labels while holding the number of pixels fixed at $50$. As the plots show, MNP running time grows sharply with problem size, whereas the lattice-based method remains substantially faster over the full range tested.

\begin{figure}[t]
	\centering
	\includegraphics[width=0.95\textwidth]{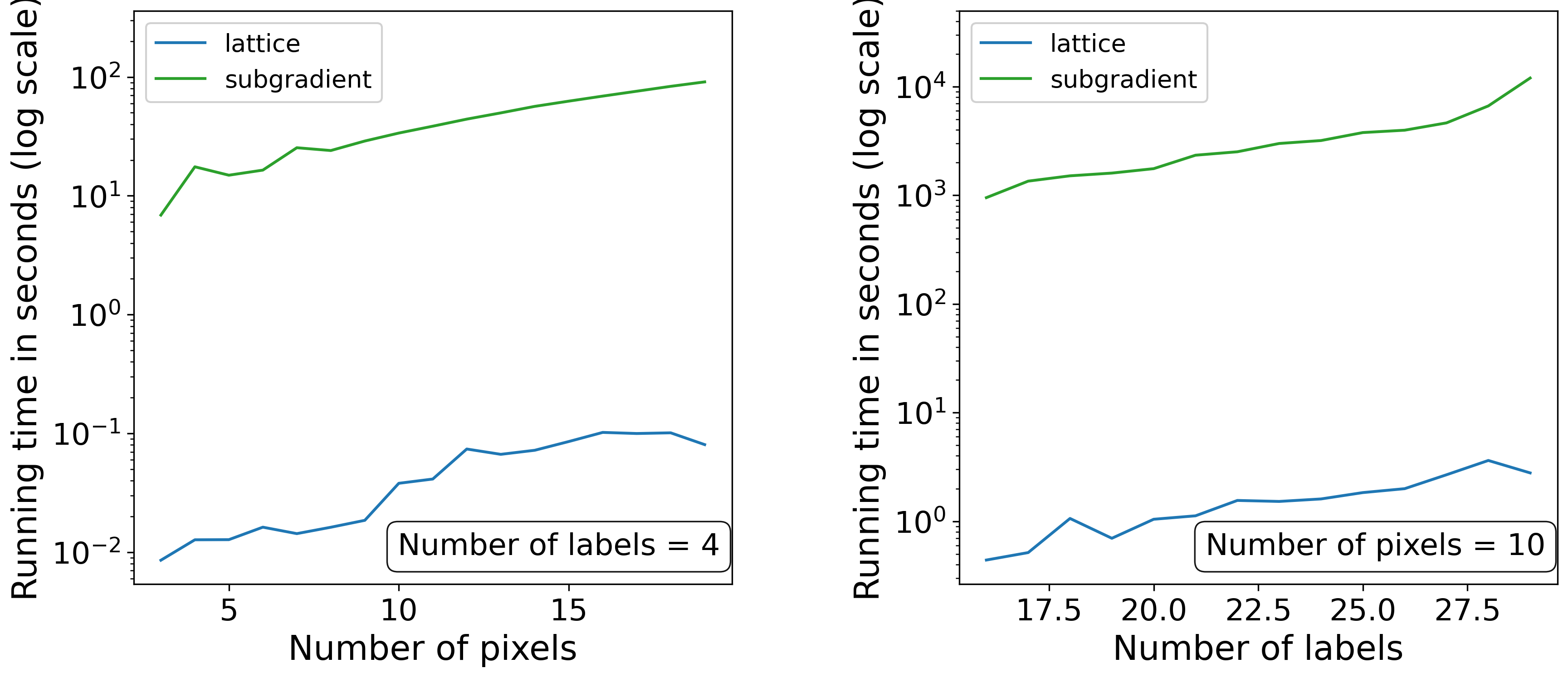}
	\caption{Scalability comparison with the subgradient method under non-decomposable potentials. Left: running time versus the number of pixels with the label count fixed at $4$. Right: running time versus the number of labels with the number of pixels fixed at $10$. Both panels use a logarithmic vertical axis.}
	\label{fig:compsubgrad}
\end{figure}

\begin{figure}[t]
	\centering
	\includegraphics[width=0.95\textwidth]{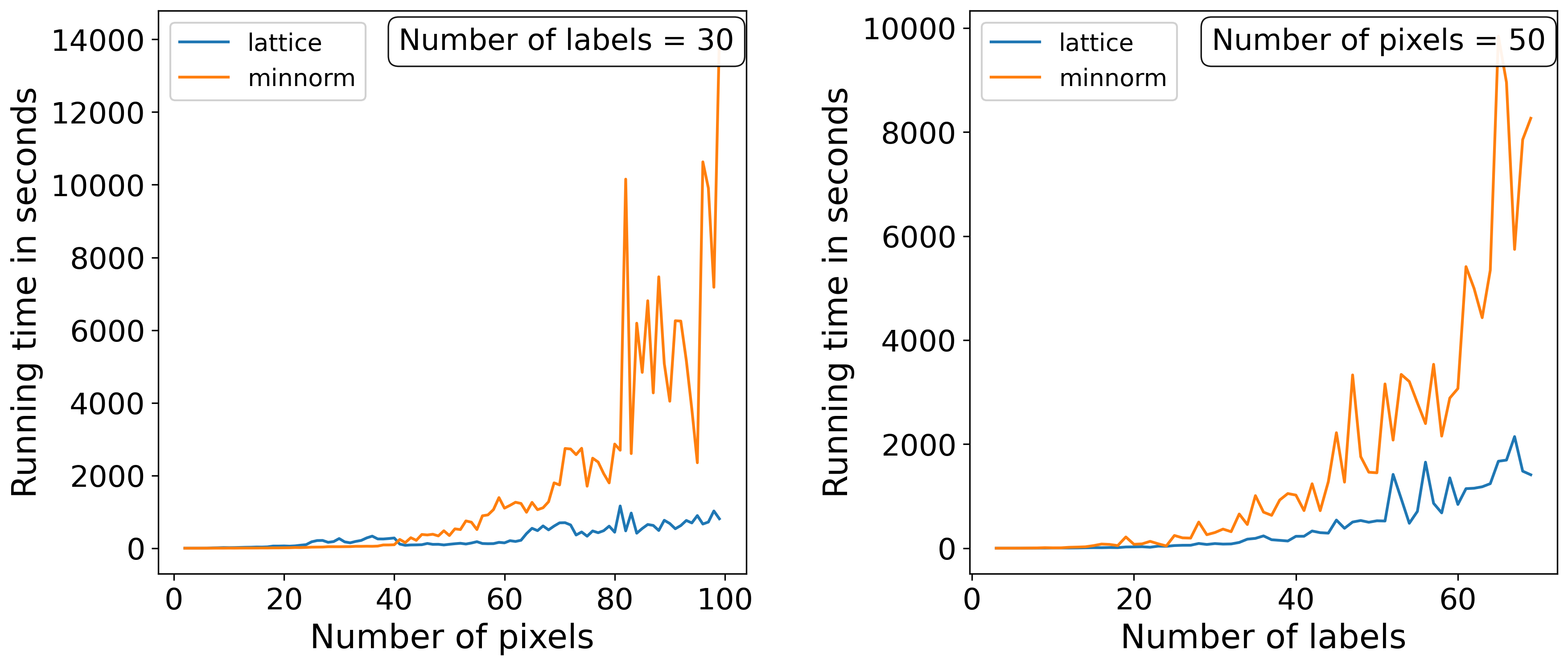}
	\caption{Scalability comparison with the minimum-norm-point (MNP) method on larger synthetic instances. Left: running time versus the number of pixels with the label count fixed at $30$. Right: running time versus the number of labels with the number of pixels fixed at $50$.}
	\label{fig:complml}
\end{figure}

	\section{Conclusions}

We have presented a generic framework for submodular function minimization defined on distributive lattices. Unlike the prevailing approach of extrapolating the problem to an equivalent Boolean lattice, our method operates entirely within the distributive lattice itself. This avoids the exponential blow-up in domain size that makes traditional Schrijver-style transformations impractical for real applications. Because the framework is lattice-native, established submodular minimization algorithms developed for Boolean lattices can be adapted and reused in a principled way, making the overall approach both modular and broadly applicable across distributive lattice structures.

Our experiments confirm that this design translates into substantial practical gains. Compared with traditional methods that first extend the submodular function beyond the lattice domain, our approach is several orders of magnitude faster in running time. Against the minimum-norm-point algorithm on larger instances, it remains substantially more efficient as the number of labels and elements grows. Across these comparisons, the proposed method is not only faster but also more stable: its performance degrades far less sharply with problem size, and its runtime behavior is more predictable than that of extrapolation-based baselines.

Taken together, the framework offers an efficient, stable, and generic solution for submodular minimization on distributive lattices. By eliminating the need to work in an artificially enlarged Boolean domain, it provides a practical route to scaling lattice-defined optimization problems in computer vision, machine learning, and related areas where distributive structure is inherent to the problem formulation.

\bibliographystyle{style-files/splncs04}
\bibliography{sections/egbib_ml}

\end{document}